\def\eqref#1{equation~\ref{#1}}
\def\1{\bm{1}}
\DeclareMathAlphabet{\mathsfit}{\encodingdefault}{\sfdefault}{m}{sl}
\SetMathAlphabet{\mathsfit}{bold}{\encodingdefault}{\sfdefault}{bx}{n}
\DeclareMathOperator*{\argmin}{arg\,min}
\theoremstyle{plain}
\newtheorem{assumption}{Assumption}
\newtheorem{definition}{Definition}
\newtheorem{remark}{Remark}
\newcommand{\purechildren}{\text{PCh}_{\graph}}
\newcommand{\node}[1]{\mathsf{#1}}
\newcommand{\set}[1]{\mathbf{#1}}
\newcommand{\setset}[1]{\mathcal{#1}}
\newcommand{\graph}{\mathcal{G}}
\newcommand{\parents}{\text{Pa}_{\graph}}
\newcommand{\empX}{\hat{\Sigma}_{\set{X}}}
\newcommand{\operator}{\mathcal{O}}
\newcommand{\setstate}{\mathbb{S}}
\newcommand{\state}{\mathcal{S}}
\DeclareMathOperator{\supp}{supp}
\DeclareMathOperator{\diag}{diag}
\DeclareMathOperator{\rank}{rank}
\crefname{assumption}{Assumption}{Assumptions}
\title{Score-based Greedy Search for Structure Identification of Partially Observed Causal Models}
\author{%
    Xinshuai Dong$^{1}$ \quad
    Ignavier Ng $^{1}$  \quad
    Haoyue Dai$^{1}$ \quad
    Jiaqi Sun$^{1}$ \quad
    Xiangchen Song$^{1}$ \quad \\
    \textbf{Peter Spirtes}$^{1}$ \quad
    \textbf{Kun Zhang}$^{1,2}$\\
    $^1$Carnegie Mellon University\\
    $^2$Mohamed bin Zayed University of Artificial Intelligence
}
\begin{document}

\maketitle

\begin{abstract}
Identifying the structure of a partially observed causal system 
is essential to various scientific fields.
Recent advances have focused on constraint-based causal discovery to solve this problem,
and yet in practice these methods often face challenges related to multiple testing and error propagation.
These issues could be mitigated by a score-based method
and thus it has  raised great attention whether there exists a score-based greedy search method that can handle the partially observed scenario.
In this work, 
we propose the first score-based greedy search method for the identification of structure involving latent variables
with
identifiability guarantees.
Specifically,
we propose {the} Generalized N Factor Model and establish {its} global consistency: 
 the true structure including latent variables can be identified  up to the Markov equivalence class by using score.
We then design
 Latent variable Greedy Equivalence Search (LGES),
a greedy search algorithm for this class of {models} with well-defined operators,
 which {searches} very efficiently over the graph space to find the optimal structure.
Our experiments on both synthetic and real-life data validate the  effectiveness of our method
(code will be available at \url{https://github.com/dongxinshuai/scm-identify}).
\end{abstract}

\vspace{-1em}
\section{Introduction and Related Work}
\label{sec:intro}
\vspace{-0.5em}
Causal discovery aims at 
identifying the causal relations
from  observational data
and it is crucial to many scientific fields \citep{spirtes2001causation,pearl2009causality}. 
However, traditional methods such as PC \citep{spirtes2001causation}, GES \citep{chickering2002optimal}, and LiNGAM \citep{shimizu2006linear},
rely on the causal sufficiency assumption, i.e., the absence of latent variables,
 which hardly holds in many real-world scenarios. 
  Therefore, extensive efforts are being made {towards} structure identification of a partially observed causal model. 
  
To handle this problem, the earliest attempts make use of conditional independence including Fast Causal Inference (FCI)~\citep{spirtes2001causation, zhang2008completeness} 
and its variants~\citep{colombo2012learning, spirtes2013causal, claassen2013learning, akbari2021recursive},
as well as over-complete ICA-based techniques~\citep{hoyer2008estimation,salehkaleybar2020learning}. 
Yet, these methods only focus on identifiable relations among observed variables, 
and the results provide limited information about structure among latent variables.
\looseness=-1


To this end, recent advance has been on {the} discovery of {the} entire structure including latent variables,
by introducing additional parametric or graphical assumptions.
 Typical  methods include rank or tetrad constraints based under linearity assumption~\citep{silva2003learning,silva2006learning,silva2005generalized,choi2011learning,kummerfeld2016causal,huang2022latent,dong2023versatile},
   high-order moments~\citep{shimizu2009estimation,zhang2018causal,cai2019triad,salehkaleybar2020learning,xie2020generalized,adams2021identification,
   dai2022independence,amendola2023third,wang2023causal},
    matrix decomposition~\citep{anandkumar2013learning}, mixture oracles~\citep{kivva2021learning},
     and multiple domains \citep{zeng2021causal,sturma2023unpaired}. 
     Despite of the asymptotic correctness, however,
     these methods generally fall into the category of constraint-based
     methods; they rely heavily on statistical {tests}
     to iteratively construct the structure and thus suffer from 
     the problem of multiple-testing and error propogation \citep{spirtes2010introduction,colombo2012learning},
     especially with small sample size and large number of variables.
     \looseness=-1

On the other hand, score-based causal discovery may not suffer from these issues 
and thus is believed to be more practially favorable
 \citep{nandy2018high,ramsey2017million}.
One of the most classical methods is Greedy Equivalence Search (GES)~\citep{chickering2002optimal},
and yet it cannot handle the existence of latent variables.
Later, various score-based methods that allow the existence of latent variables have been proposed
\citep{shpitser2012parameter,triantafillou2016score,nowzohour2017distributional,
bhattacharya2020differentiable,shahin2020automatic,bernstein2020ordering,bellot2021deconfounded,claassen2022greedy},
and yet they still focus only on relations among observed variables, except the one by \citet{zhang2004hierarchical} without identifiability and another with exact search by \citet{ngscore} {(a more detailed discussion between this work and the exact search \citep{ngscore} can be found in \cref{sec:relatedwork})}.
%
As a consequence, {the following crucial question naturally arises}:
Is it possible to develop a score-based greedy search method that can efficiently recover the entire underlying structure 
involving latent variables with an asymptotic correctness guarantee?\looseness=-1

To address such a challenging problem, 
we are confronted with three fundamental questions:
(i) What is  {the essential relations between structure identifiability and likelihood scores?}
(ii) What graphical assumptions are needed to uniquely recover the structure by using score?
(iii) How can we design an efficient algorithm to search over the graph space to find the optimal structure?
 We provide our answers to these questions respectively in \cref{sec:algebraic_equivalence,sec:graphical_assumption_and_consistency,sec:method}
 and our contributions can be summarized as follows.
 \looseness=-1


\begin{itemize}[leftmargin=*, itemsep=0pt]
\item We characterize how {the} likelihood score can be used for the structure identification of partially observed linear causal models. Specifically, we show that
      the structure with the best likelihood score and minimal dimension is algebraically equivalent to the ground truth (in \cref{thm:equivalence_equality_constraints}).
\item We propose the GNFM (in Def. \ref{definition:gnfm}),
and 
accordingly establish the global consistency
of score for it -
the whole underlying structure can be uniquely recovered up to the {bMarkov Equivalence Class (MEC)} by using {the score}
 (in \cref{thm:identify_gnfm,corollary:global_consistency}). 
This graphical condition is rather mild and takes the prevalent one factor model \citep{silva2003learning} as a special case.
\item 
 We develop the Latent variable Greedy Equivalence Search (LGES),
 an efficient causal discovery algorithm 
 for identifying structure involving latent variables.
 To our best knowledge, this is the first score-based greedy search with identifiability {guarantees}
  in the  partially observed scenario (in Thm. \ref{theorem:correctphase2}).
 Our experiments on both synthetic and
 real-world dataset empirically validate its effectiveness.
\end{itemize}

\vspace{-1em}
\section{Preliminaries}
 \vspace{-0.5em}
\subsection{Problem Setting}
\vspace{-0.5em}
We aim to identify the structure of a partially observed 
linear causal model, defined as follows.

\begin{definition} [Partially Observed Linear Causal Models]
    Let $\graph:=(\set{V}_\graph,\set{E}_\graph)$ be a Directed Acyclic Graph (DAG) and 
     variables follow a linear SEM as
$\set{V}_\graph=F^T\set{V}_\graph+\epsilon_{\set{V}_\graph}$,
  where $\set{V}_\graph=\set{L}_\graph\cup\set{X}_\graph=\{\node{V}_i\}_{1}^{m+n}=\{\node{L}_i\}_{1}^{m}\cup\{\node{X}_i\}_{1}^{n}$
   contains $m$ latent variables and $n$ observed variables,
   $F=(f_{j,i})$ 
   is the weighted adjacency matrix and 
   $f_{j,i}\neq 0$ if and only if $V_j$ is a parent of $V_i$ in $\graph$,
   and $\epsilon_{\node{V}_i}$ represents the Gaussian noise term of $\node{V}_i$.
  \label{definition:polcm}
  \vspace{-1mm}
  \end{definition}
  
  Our goal is to identify the underlying  structure $\graph$ over all the variables $\set{L}_\graph\cup\set{X}_\graph$,
  given i.i.d. samples of observed variables $\set{X}_\graph$ only. 
  Note that the name/order of latent variables can never be identified so 
  we focus on structure identification up to permutation of latent variables.
  Without loss of generality, we can assume that all variables have zero mean, and thus
  the observational data can also be summarized as the empirical covariance matrix over observed variables,
 i.e., $\hat{\Sigma}_{\set{X}_\graph}$.
 We use $\node{V}$ and $\set{V}$, to denote a random variable and a set of variables, respectively.
 We drop the subscript $\graph$ in $\set{L}_\graph$ and $\set{X}_\graph$ when the context is clear. 
 For a matrix $M$, we define its support set as $\supp(M)\coloneqq\{(i,j):M_{i,j}\neq 0\}$. 
 $\graph_1$ and $\graph_2$ belong to the same MEC iff they share the same skeleton and set of v-structures {(in \cref{def:vstructure})}
 over the entire graph including latent variables.

\vspace{-1em}
\subsection{Likelihood Score}
\vspace{-0.5em}
Despite the asymptotic correctness of constraint-based causal discovery approaches,
in practice these methods 
often suffer from the problem of multi-testing and error propagation \citep{spirtes2010introduction,colombo2012learning}.
In the finite sample case,
they rely heavily on statistical tests
to iteratively build the result, while the 
power of each test might be limited especially when the sample size is small and number of variables is large.
 %
 On the contrary,
 score-based causal discovery methods
 may not suffer from these problems and could be  
 practically more favorable \citep{nandy2018high,ramsey2017million},
 especially when the sample size is small (also empirically validated in \cref{sec:synthetic_data}).
 {A more detailed discussion about error propagation can be found in \cref{sec:discussion_on_error_propagation}.}

 %

 %
%
To this end, in this work we aim at structure identification based on the use of likelihood scores in the partially observed scenario.
In contrast to the fully observed case,
in the presence of latent variables, 
the formulation of the likelihood is not trivial,
and we provide it in what follows.

\begin{restatable}[Parameterization of Population Covariance \citep{dong2024parameter}]{proposition}{PropositionCovarianceMatrix}\label{proposition:covariance_matrix}
    Consider the model defined in Def.~\ref{definition:polcm},
    and let $F${\tiny$=\begin{pmatrix}
        F_{\set{L}\set{L}}   & F_{\set{L}\set{X}}  \\
        F_{\set{X}\set{L}}   & F_{\set{X}\set{X}} 
       \end{pmatrix}$}, and $\Omega${\tiny$=\begin{pmatrix}
        \Omega_{\epsilon_{\set{L}}} & 0 \\
        0 & \Omega_{\epsilon_{\set{X}}}
        \end{pmatrix}$},
        where $\Omega$ is the diagonal covariance matrix of $\epsilon_{\set{V}_\graph}$.
        Let
    $M=((I-F_{\set{L}\set{L}}-F_{\set{L}\set{X}}(I-F_{\set{X}\set{X}})^{-1} F_{\set{X}\set{L}}))^{-1}$, 
    $N=(((I-F_{\set{L}\set{L}})F_{\set{X}\set{L}}^{-1}(I-F_{\set{X}\set{X}})-F_{\set{L}\set{X}}))^{-1}$,
    and $\Sigma_{\set{L}}=M^{T} \Omega_{\epsilon_\set{L}} M + N^T\Omega_{\epsilon_\set{X}} N$.
    Then the population covariance of $\set{X}$ can be formulated as
    \vspace{-0.5em}
    \begin{equation}
    \Sigma_{\set{X}}=(I-F_{\set{X}\set{X}})^{-T}\Big(F_{\set{L}\set{X}}^{T} \Sigma_{\set{L}} F_{\set{L}\set{X}} 
    +\Omega_{\epsilon_\set{X}} N F_{\set{L}\set{X}} + \Omega_{\epsilon_\set{X}} +F_{\set{L}\set{X}}^{T} N^T \Omega_{\epsilon_\set{X}}\Big)(I-F_{\set{X}\set{X}})^{-1}.
    \end{equation}
    \vspace{-1.5em}
\end{restatable}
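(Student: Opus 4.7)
The plan is to derive the reduced form of $(\set{L},\set{X})$ by block-inverting the system $\set{V}=F^{T}\set{V}+\epsilon_{\set{V}}$, and then read off $\Sigma_{\set{X}}$ by direct covariance computation. First I would partition the SEM into its two block equations
\begin{align*}
\set{L} &= F_{\set{L}\set{L}}^{T}\set{L}+F_{\set{X}\set{L}}^{T}\set{X}+\epsilon_{\set{L}},\\
\set{X} &= F_{\set{L}\set{X}}^{T}\set{L}+F_{\set{X}\set{X}}^{T}\set{X}+\epsilon_{\set{X}}.
\end{align*}
Solving the second equation for $\set{X}$ gives $\set{X}=(I-F_{\set{X}\set{X}}^{T})^{-1}\bigl(F_{\set{L}\set{X}}^{T}\set{L}+\epsilon_{\set{X}}\bigr)$, and substituting into the first equation yields an implicit equation for $\set{L}$ whose coefficient matrix is exactly $M^{-T}$ with $M$ as defined in the statement. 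Hence
\[
\set{L}=M^{T}\epsilon_{\set{L}}+M^{T}F_{\set{X}\set{L}}^{T}(I-F_{\set{X}\set{X}}^{T})^{-1}\epsilon_{\set{X}}.
\]

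Next I would establish the key algebraic identity
\[
N=(I-F_{\set{X}\set{X}})^{-1}F_{\set{X}\set{L}}M,
\]
which is the bridge between the reduced form above and the statement's formula for $\Sigma_{\set{L}}$. This follows by verifying $N^{-1}=M^{-1}F_{\set{X}\set{L}}^{-1}(I-F_{\set{X}\set{X}})$; expanding $M^{-1}=I-F_{\set{L}\set{L}}-F_{\set{L}\set{X}}(I-F_{\set{X}\set{X}})^{-1}F_{\set{X}\set{L}}$ and simplifying reproduces the defining expression $(I-F_{\set{L}\set{L}})F_{\set{X}\set{L}}^{-1}(I-F_{\set{X}\set{X}})-F_{\set{L}\set{X}}$ for $N^{-1}$. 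Using independence of $\epsilon_{\set{L}}$ and $\epsilon_{\set{X}}$, the covariance of $\set{L}$ in the reduced form is then $M^{T}\Omega_{\epsilon_{\set{L}}}M+N^{T}\Omega_{\epsilon_{\set{X}}}N$, matching $\Sigma_{\set{L}}$.

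Finally, I would compute $\Sigma_{\set{X}}$ directly from $\set{X}=(I-F_{\set{X}\set{X}}^{T})^{-1}(F_{\set{L}\set{X}}^{T}\set{L}+\epsilon_{\set{X}})$. The only nontrivial term is the cross covariance $\mathrm{Cov}(\set{L},\epsilon_{\set{X}})$, which by the reduced form for $\set{L}$ equals $M^{T}F_{\set{X}\set{L}}^{T}(I-F_{\set{X}\set{X}}^{T})^{-1}\Omega_{\epsilon_{\set{X}}}=N^{T}\Omega_{\epsilon_{\set{X}}}$ by the identity just proved. Expanding
\[
\Sigma_{\set{X}}=(I-F_{\set{X}\set{X}})^{-T}\bigl[F_{\set{L}\set{X}}^{T}\Sigma_{\set{L}}F_{\set{L}\set{X}}+F_{\set{L}\set{X}}^{T}\,\mathrm{Cov}(\set{L},\epsilon_{\set{X}})+\mathrm{Cov}(\epsilon_{\set{X}},\set{L})F_{\set{L}\set{X}}+\Omega_{\epsilon_{\set{X}}}\bigr](I-F_{\set{X}\set{X}})^{-1}
\]
and substituting the cross term produces the claimed formula exactly.

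The main obstacle is purely bookkeeping: carefully tracking transposes induced by the $F^{T}$ convention, and verifying that the coefficient of $\epsilon_{\set{X}}$ that appears naturally as $M^{T}F_{\set{X}\set{L}}^{T}(I-F_{\set{X}\set{X}}^{T})^{-1}$ coincides with $N^{T}$ as defined. Once that identity is in hand, both $\Sigma_{\set{L}}$ and the cross covariance fall out, and collecting them into the expression for $\Sigma_{\set{X}}$ is mechanical. No further assumption beyond the invertibility statements already implicit in the definitions of $M$ and $N$ (in particular that $F_{\set{X}\set{L}}$ is square and invertible where the definition of $N$ uses $F_{\set{X}\set{L}}^{-1}$) is required.
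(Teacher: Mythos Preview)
The paper does not actually prove this proposition; it is quoted from \citet{dong2024parameter} and used as a black box for the likelihood formulation, with no derivation given in the text or appendix. So there is no paper proof to compare against.

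That said, your derivation is correct and self-contained. The block elimination of $\set{X}$ followed by back-substitution is the natural route, and the identity $N=(I-F_{\set{X}\set{X}})^{-1}F_{\set{X}\set{L}}M$ is exactly the right bridge: it simultaneously explains why the coefficient of $\epsilon_{\set{X}}$ in the reduced form for $\set{L}$ is $N^{T}$ and why the cross covariance $\mathrm{Cov}(\set{L},\epsilon_{\set{X}})$ collapses to $N^{T}\Omega_{\epsilon_{\set{X}}}$. Your caveat at the end is also well placed: the statement's definition of $N$ presupposes that $F_{\set{X}\set{L}}$ is square and invertible, which is an implicit assumption not spelled out in the proposition itself.
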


By making use of \cref{proposition:covariance_matrix} to parametrize $\Sigma_{\set{X}}$, the maximum log-likelihood of a given structure $\graph$ and 
observation $\empX$ is as follows {($\operatorname{tr}$ and $\det$ refers to matrix trace and determinant, respectively)}. 
\vspace{-0em}
\begin{align}
    \label{eq:ml}
    &\text{score}_{\text{ML}}(\graph,\empX)=\max_{\substack{(F,\Omega):~
    \supp(F)\subseteq \supp(F_\graph),
    \Omega\in\diag(\mathbb{R}_{> 0}^{n+m})}
    } ~\mathcal{L}, \\
    &\mathcal{L}=-(N/2)(
    \operatorname{tr}((\Sigma_{\set{X}})^{-1}\hat{\Sigma}_{\set{X}})+\log \det \Sigma_{\set{X}}).
\end{align}
\vspace{-1.5em}

We next show the theoretical foundation of using maximum likelihood score for the structure identification of partially observed linear causal models.

\vspace{-1em}
\section{Score-Based Identifiability Theory for Partially Observed Causal Models}
\vspace{-0.5em}
\subsection{Algebraic Equivalence by Score and Dimension}
\vspace{-0.5em}
\label{sec:algebraic_equivalence}

Consider a model in Def. \ref{definition:polcm}.
Its structure imposes various types of equality (i.e., algebraic) constraints on the covariance matrices (over observed variables),
no matter how its parameters $(F,\Omega)$ may change.
The imposed equality constraints are properties of the observational distribution and
 contain crucial graphical information about the underlying structure $\graph$.
Such constraints 
include conditional independence (i.e., vanishing partial correlation) 
constraints \citep{spirtes2001causation},
 rank constraints (i.e., vanishing determinant) \citep{spirtes2001causation,sullivant2010trek}, 
 and possibly Verma constraints~\citep{verma1991equivalence}. An overview can be found in \citet{drton2018algebraic}.

  Let $H(\graph)$ be the set of equality constraints imposed by structure $\graph$ on the generated 
  covariance matrices over observed variables (detailed in \cref{def:HG}),
  $\mathbb{G}^n$ be the set of all DAG structures that has $n$ measured variables,
  and 
  $\mathbb{H}^n\coloneqq\bigcup_{\graph\in\mathbb{G}^n}B(\graph)$,
  where $B(\graph)$ consists of the canonical and minimal set of equality constraints with respect to reduced Gröbner basis 
  (detailed in \cref{def:BGHn}).
  We say two structures $\graph_1$ and $\graph_2$ are { algebraically} equivalent,
   if they lead to the same equality constraints (on the observational distribution), i.e., $H(\graph_1)=H(\graph_2)$
   ~\citep{ommen2017algebraic}.
    Similar to the classical CI faithfulness assumption in causal discovery \citep{spirtes2001causation},
    to better relate the constraints to the underlying structure,
    we assume the generalized faithfulness as follows.

\begin{assumption}[Generalized faithfulness \citep{ghassami2020characterizing}]\label{assumption:generalized_faithfulness}
    A distribution $\Sigma_\set{X}$ is said to be generalized faithful to DAG $\graph \in \mathbb{G}^n$ if the entries of $\Sigma_\set{X}$ satisfy 
    an equality constraint $\kappa\in\mathbb{H}^n$ only if $\kappa\in H(\graph)$.
    \vspace{-0.5em}
    \end{assumption}

    In \cref{assumption:generalized_faithfulness},
    it suffices to use $\Sigma_{\set{X}}$ to denote the distribution, 
    as $\set{X}$ are jointly gaussian 
     and mean do not contain any information about structure \citep{ghassami2020characterizing}.
    Note that different types of faithfulness assumptions have been adopted in causal discovery,
    e.g., CI faithfulness and rank faithfulness \citep{spirtes2001causation,ghassami2020characterizing,huang2022latent,dong2023versatile}
    and 
    \cref{assumption:generalized_faithfulness} is the generalized version of them for linear causal models.
    Similar to CI faithfulness, generalized faithfulness is justified by that 
    the set of parameters that result in violation has Lebesgue measure zero 
    \citep{ghassami2020characterizing}
    and it has been  widely adopted in the field \citep{ng2020role,bhattacharya2021differentiable,sethuraman2023nodags}.
     On the other hand,
    without faithfulness, graphical information extracted from  observations cannot be trusted, 
    which makes structure identification extremely hard, if not impossible.

    Furthermore, let $\dim(\mathcal{G})$ denote the model dimension or degrees of freedom of DAG 
    $\mathcal{G}$ for the marginal over the observed variables (which can also be viewed as the number of free parameters of the set of distribution it can generate).
    In the absence of latent variables,  the degrees of freedom are nothing but the sum of { the} number of edges and {the} number measured variables.
    However,
     in the presence of latent variables, it does not necessarily hold \citep{geiger1996asymptotic,geiger2001stratified}.
    Without any specific graphical assumption, capturing the dimension could be highly non-trivial; e.g., the analysis of the degrees of freedom for sparse factor analysis,
    where latent variables are independent, already involves complex techniques from algebraic statistics \citep{drton2023algebraic}.
    We note that the focus of this work is not to characterize the dimension of each structure.
     In contrast, we only need to know the basic idea of dimension here and later we will show that
    a greedy search does not necessarily rely on knowing the exact dimension of a graph.

    \begin{figure}[t]
        \vspace{-0mm}
      \setlength{\belowcaptionskip}{0mm}
          \subfloat[Ground truth $\graph^*$.]
          {
          \centering
           \vspace{-1mm}\includegraphics[width=0.18\textwidth]{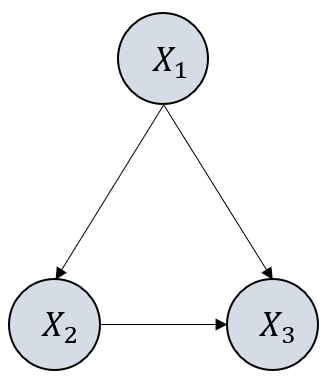}
          }
          \hspace{7mm}
          \subfloat[$\hat{\graph}_1$ by ~\cref{thm:equivalence_equality_constraints}.]
          {
          \centering
           \vspace{-1mm}\includegraphics[width=0.18\textwidth]{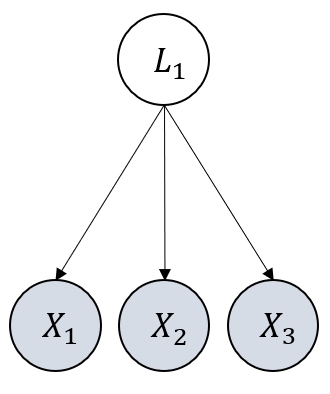}
          }
          \hspace{7mm}
          \subfloat[$\hat{\graph}_2$ by ~\cref{thm:equivalence_equality_constraints}.]
          {
          \centering
          \vspace{-1mm}
          \includegraphics[width=0.18\textwidth]{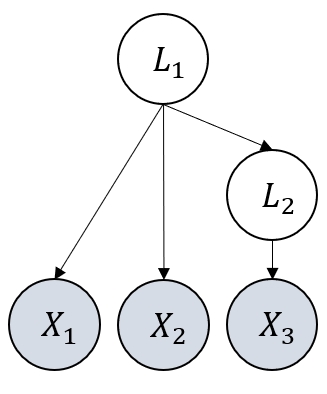}
          }
          \centering
          \vspace{-3mm}
      \caption{
      Without further graphical assumption,
      the algebraic equivalence class is very large and not very informative:
      suppose the ground truth $\graph^*$ in (a),
       by ~\cref{thm:equivalence_equality_constraints} we may arrive at either $\hat{\graph}_1$ (b) or $\hat{\graph}_2$ (c),
       both are algebraically equivalent to $\graph^*$. 
       }
        \label{fig:1}
        \vspace{-6mm}
      \end{figure}

    Now we  are ready to present the key result of this subsection, achieving algebraic equivalence by making use of {the likelihood} score,
    captured in the following theorem.

    \begin{restatable}[Algebraic Equivalence by Score and Dimension]{theorem}{TheoremEquivalenceEqualityConstraints}\label{thm:equivalence_equality_constraints}
        Suppose a model follows \cref{definition:polcm} with $\graph^*$
        and distribution $\Sigma^{*}_\set{X}$ satisfies the generalized faithfulness assumption.
        Given observation $\empX$
        and let $\mathbb{G}^* = \arg\max_{\graph\in\mathbb{G}^n}
        \text{score}_{\text{ML}}(\graph,\empX)$.
        If  $\hat{\mathcal{G}}\in \mathbb{G}^*$ and 
        $\hat{\graph}\in \argmin_{\graph\in\mathbb{G}^*} \dim(\graph)$,
        then 
        $\hat{\mathcal{G}}$ and $\mathcal{G}^*$ are algebraic equivalent, i.e.,
         $H(\hat{\mathcal{G}})=H(\mathcal{G}^*)$, in the large sample limit.
    \end{restatable}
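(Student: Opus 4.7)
The plan is to work directly in the large-sample limit (so $\empX$ may be replaced by $\Sigma^*_\set{X}$), first characterize $\mathbb{G}^*$ in terms of the parametric image of the covariance parameterization, then use generalized faithfulness to obtain a one-sided containment between constraint sets, and finally use the minimum-dimension condition together with irreducibility of the parameterized variety to upgrade the containment to equality.

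\emph{Step 1: characterize $\mathbb{G}^*$.} In the large-sample limit, the normalized score $\mathcal{L}/N$ converges to $-\tfrac{1}{2}(\tr(\Sigma_\set{X}^{-1}\Sigma^*_\set{X}) + \log\det\Sigma_\set{X})$, which by standard Gaussian MLE theory is uniquely maximized over positive-definite $\Sigma_\set{X}$ at $\Sigma_\set{X}=\Sigma^*_\set{X}$. Let $M(\graph)$ denote the image of the parameterization in \cref{proposition:covariance_matrix} restricted to $\supp(F)\subseteq\supp(F_\graph)$ and $\Omega$ positive-diagonal. The ground truth satisfies $\Sigma^*_\set{X}\in M(\graph^*)$, so $\graph^*\in\mathbb{G}^*$; and any other $\graph\in\mathbb{G}^*$ must also attain this maximum, forcing $\Sigma^*_\set{X}\in\overline{M(\graph)}\subseteq V(H(\graph))$, where $V(\cdot)$ denotes the zero-set of the ideal.

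\emph{Step 2: constraint containment via faithfulness.} Fix any $\graph\in\mathbb{G}^*$ and any generator $\kappa\in B(\graph)\subseteq\mathbb{H}^n$. Since $\kappa\in H(\graph)$ vanishes on $V(H(\graph))$ and $\Sigma^*_\set{X}\in V(H(\graph))$, we get $\kappa(\Sigma^*_\set{X})=0$. By \cref{assumption:generalized_faithfulness}, this forces $\kappa\in H(\graph^*)$. Because $B(\graph)$ generates $H(\graph)$ as an ideal, we conclude $H(\graph)\subseteq H(\graph^*)$, and dually $V(H(\graph))\supseteq V(H(\graph^*))$, which yields $\dim(\graph)\geq\dim(\graph^*)$.

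\emph{Step 3: close the gap via the dimension condition.} Applied to $\hat{\graph}$, Step 2 gives $\dim(\hat{\graph})\geq\dim(\graph^*)$. On the other hand, because $\graph^*\in\mathbb{G}^*$ and $\hat{\graph}$ minimizes dimension over $\mathbb{G}^*$, we have $\dim(\hat{\graph})\leq\dim(\graph^*)$. Therefore $\dim(\hat{\graph})=\dim(\graph^*)$. Both $V(H(\hat{\graph}))$ and $V(H(\graph^*))$ are irreducible, being Zariski closures of the image of the rational map of \cref{proposition:covariance_matrix} applied to an irreducible parameter domain (a product of an affine space and a positive orthant). An irreducible variety that contains another irreducible variety of equal dimension must coincide with it, so $V(H(\hat{\graph}))=V(H(\graph^*))$, and passing to vanishing ideals yields $H(\hat{\graph})=H(\graph^*)$.

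\textbf{Main obstacle.} The delicate point is Step 3: one must identify $H(\graph)$ with the full vanishing ideal of $M(\graph)$ so that $\dim(\graph)=\dim V(H(\graph))$, and one must invoke irreducibility of $V(H(\graph))$. The paper's choice of $B(\graph)$ as a reduced Gröbner basis is precisely what pins down this identification, and irreducibility is guaranteed because the model is parameterized rationally from an irreducible domain. Given this algebraic scaffolding, Steps 1 and 2 are routine applications of Gaussian MLE consistency and the faithfulness assumption respectively.
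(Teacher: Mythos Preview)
Your proposal is correct and follows essentially the same route as the paper: obtain $H(\hat{\graph})\subseteq H(\graph^*)$ from generalized faithfulness, combine with $\dim(\hat{\graph})\leq\dim(\graph^*)$ from the minimization, and conclude equality. The paper's proof compresses your Step~3 into the single line ``$H(\hat{\graph})\subsetneq H(\graph^*)$ implies $\dim(\hat{\graph})>\dim(\graph^*)$, contradiction,'' whereas you spell out the irreducibility argument that actually justifies this implication; your version is more careful on a point the paper leaves implicit.
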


    \begin{remark}
        \cref{thm:equivalence_equality_constraints} (partially inspired by \cite{ngscore})
        says that,
       if $\hat{\graph}$ can generate the observation $\empX$,
       and $\hat{\graph}$ has the smallest dimension among those graphs that can generate the observation,
       then $\hat{\graph}$ and $\graph^*$ are algebraically equivalent.
       In other words, we can enumerate all the graphs in the assumed graph space, and utilize \cref{thm:equivalence_equality_constraints} to 
         find a graph $\hat{\graph}$ that  is algebraically equivalent to the ground truth $\graph^*$.
        In the absence of latent variables, if $\hat{\graph}$ and $\graph^*$ are algebraically equivalent,
         $\hat{\graph}$ and $\graph^*$ belong to the same MEC,
        and thus we can make use of score 
        to identify the structure up to MEC. In this sense, \cref{thm:equivalence_equality_constraints} 
        takes the theoretical guarantee of score, e.g., in GES \citep{chickering2002optimal} 
        as a special case and generalizes it to the partially observed scenario.
        \vspace{-0.5em}
    \end{remark}

    Algebraic equivalence is a general sense of equivalence: if two graphs are algebraically equivalent, then we cannot differentiate them purely by observational data without any further  assumption.
    The reason lies in that, in the linear gaussian case, all the information from the distribution are just equality  and inequality constraints {(in \cref{def:HG})},
    and generally inequality constraints cannot be used; we have to assume inequality-constraint-faithfulness 
    to utilize inequality-constraints,
    but the set of parameters that results in violation of such faithfulness is not of Lebesgue measure zero.

    At this point,  we have characterized how { the} likelihood score can be used for
    structure identification in the partially observed scenario.
    Yet, there still exist two main challenges.
    First, in the partially observed scenario, without any graphical assumption, relating $\hat{\graph}$ to $\graph^*$ can be very challenging,
    as the algebraic equivalence class is very large.
    For example,
    suppose the ground truth graph is $\graph^*$ in \cref{fig:1}(a).
    Even though by making use of all the equality constraints, we still just arrive at 
    some elements of the algebraic equivalence class of $\graph^*$,
    e.g., $\hat{\graph}_1$ or $\hat{\graph}_2$ in \cref{fig:1}(b) and (c).
    In fact, without any further graphical assumption, the cardinality of the class is infinity, as we can always add one more latent variable to the structure
     without changing any constraints.
     Therefore, a general recipe may involve \textit{identifying suitable structural assumptions that allow algebraic equivalence to translate into more
     fine-grained notions of model equivalence, such as Markov
     equivalence}, as in \cref{sec:graphical_assumption_and_consistency}. 

    Second, \cref{thm:equivalence_equality_constraints} only implies a search procedure that requires the exact enumeration of all possible graphs in the assumed graph space.
    Yet,  such an exact search is impractical due to the computational overhead.
    To be specific, the number of possible graphs grows super-exponentially with the increase of the number of observed variables,
    { even when we rule out a lot of structures with latent variables that cannot be identified (e.g., $\node{X}_1\rightarrow\node{L}_1\rightarrow\node{X}_2$ where we can never know whether $\node{L}_1$ exists or not).
    More specifically, if we consider the GNFM (which will be detailed in the next section with definition in \cref{definition:gnfm})
    with 10 observed variables,}
    the number of all possible graphs 
    that satisfy \cref{definition:gnfm} is more than $3\times10^8$
    and this number grows to $2\times10^{13}$ when the number of observed variables increase only by 1.
    Therefore, it is crucial to design \textit{an efficient way to search over the graph space for the identification of the optimal structure}, as  in \cref{sec:method}.

\vspace{-1em}
\subsection{Graphical Assumption and Global Consistency}
\vspace{-0.5em}
\label{sec:graphical_assumption_and_consistency}



In this section, we propose {the} generalized N factor model,
a graphical condition under which 
algebraic equivalence can be translated into Markov equivalence,
defined as follows.

\begin{definition}[Generalized N Factor Model]\label{definition:gnfm}
    DAG $\graph$ satisfies the definition of generalized N factor model if observed variables are the effects of latent variables
    and there exists a partition of all latent variables $\set{L}_\graph$ such that 
    for each element in the partition, $\set{L}_{p}$,
    (i) 
        there exist at least $|\set{L}_{p}|*2$ observed variables $\set{X}_{p}$
    such that for all $\node{X}\in\set{X}_{p}, \parents(\node{X})=\set{L}_{p}$,
    (ii)
    if { a variable $\node{V}\in\set{V}_\graph$} causes or is caused by another variable in $\set{L}_p$, then
    $\node{V}$ also causes or is caused by all the variables in $\set{L}_p$, respectively,
    and
    (iii) elements in $\set{L}_{p}$ are mutually nonadjacent.
    \vspace{-0.5em}
\end{definition}
    
For brevity, in the rest of the paper, we use $\mathbb{G}_{\text{GNFM}}$ 
to denote the set of all graphs that satisfy the definition of generalized N factor model.
For a better understanding of the generalized N factor model,
we provide an example to elaborate each requirement in \cref{definition:gnfm}.

\begin{wrapfigure}{r}{0.4\textwidth}
  \centering
  \vspace{-1.5em}
  \includegraphics[width=0.38\textwidth]{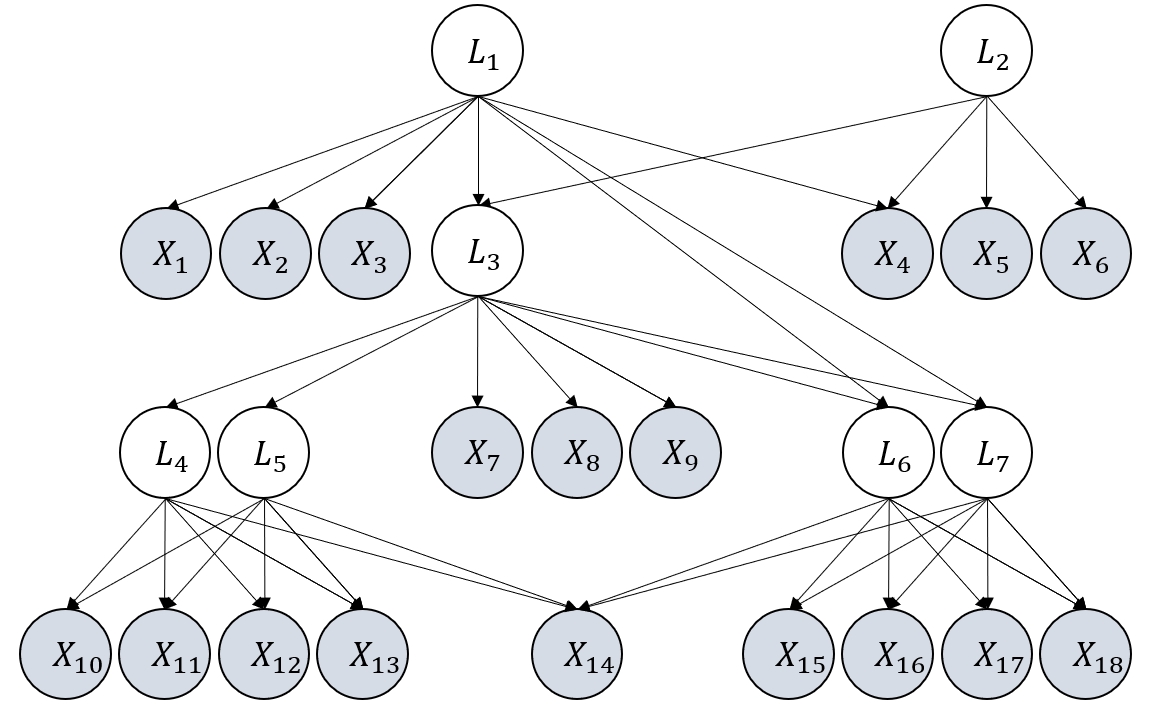}
  \caption{An illustrative example of the graph that satisfies generalized N factor model in \cref{definition:gnfm}.
    }
  \label{fig:gnfm_illustration}
  \vspace{-1.5em}
\end{wrapfigure}

Fig.~\ref{fig:gnfm_illustration} shows an illustrative graph that satisfies Def.~\ref{definition:gnfm}.
Specifically, all the observed variables in \cref{fig:gnfm_illustration}
are leaf nodes and 
 caused by only latent variables. Further, 
all latent variables 
can be partitioned into groups $\{\node{L}_1\}, \{\node{L}_2\},\{\node{L}_3\},\{\node{L}_4,\node{L}_5\},\{\node{L}_6,\node{L}_7\}$
and thus the rest  requirements (i), (ii), (iii) are also satisfied.
For example, for $\set{L}_p=\{\node{L}_6,\node{L}_7\}$,
there exist $\set{X}_p=\{\node{X}_{15},\node{X}_{16},\node{X}_{17},\node{X}_{18}\}$
such that their parents are $\{\node{L}_6,\node{L}_7\}$
and $|\set{X}_p|\geq|\set{L}_p|\times2$.
These groups 
have the properties that
the relation within a group can not be identified, 
and the relation between groups applies to every element in the group.
Thus,  requirements in Def. \ref{definition:gnfm} 
are satisfied in \cref{fig:gnfm_illustration}.

Notably, 
the graphical condition required for generalized N factor model is rather weak. It takes the prevalent one factor model assumption \citep{silva2003learning} (defined in \cref{definition:silva_graphical_criterion} and illustrated in \cref{sec:appx_one_factor_model} for a comparison) as a special case,
and further allows latent variables to share  observed variables as children. 
In GNFM, we want to make very weak assumption about how latent variables are related - they can be partitioned into groups, and the relation among each  group of latent variables can be very flexible.
At this premise, the key requirement of  $2|\mathbf{L}_{p}|$ observed children of $\mathbf{L}_{p}$  in GFNM is minimal. 
On the other hand, if we make a stronger assumption about how latent variables are related to each others than what has been made in GNF{M},
the requirement of having at least twice number of observed children can be relaxed. 

{Here we take 
$\{\node{L}_3\}$ 
in Fig. \ref{fig:gnfm_illustration} as $\set{L}_{p}$ 
to show why we need $2|\set{L}_{p}|$ observed children of $\set{L}_{p}$ . If we want to determine whether two latent groups $\{\node{L}_1\}$ and $\{\node{L}_4,\node{L}_5\}$ are  adjacent,
 we need to check whether $\{\node{L}_1\}$ and $\{\node{L}_4,\node{L}_5\}$ can be d-separated by $\{\node{L}_3\}$,
 which can be done by checking whether there exists a pure observed child of $\{\node{L}_1\}$ (e.g., $\node{X}_1$) and another pure observed child of $\{\node{L}_4,\node{L}_5\}$  (e.g., $\node{X}_{10}$) can be 
  d-separated by $\{\node{L}_3\}$. 
 As $\{\node{L}_3\}$ cannot be observed,
 we have to rely on t-separation \citep{sullivant2010trek} to check whether this d-separation holds, 
 as t-separation allows us to use the observed children of $\{\node{L}_3\}$ as surrogates.
  This can be translated into checking whether there exists two distinct groups $\set{X}_a$ and $\set{X}_b$ as the observed pure children of 
  $\{\node{L}_3\}$ such that, 
  $|\set{X}_b|=|\set{X}_b|=|\{\node{L}_3\}|$, and   
  $\{\node{X}_1\}$ and $\{\node{X}_{10}\}$
  can be t-separated by $(\set{X}_a,\set{X}_b)$.
  Thus, for each $\set{L}_{p}$ we need $2|\set{L}_{p}|$ observed children of it.
 }


The reason why GNFM  chooses this specific trade-off point is that we believe it is more practically meaningful. In real-life problems, 
the way latent variables are related could be complicated
and we do not want to  rule out the possibility of certain structural patterns in advance. 
At the same time, if the number of observed children is insufficient, we can still gather more relevant observations or measurements of the underlying system. 
\looseness=-1

Next, we show in \cref{thm:identify_gnfm} that
for GNFM, the notion of algebraic equivalence
leads to Markov equivalence,
and thus  by using score we can identify the structure up to MEC,
as in  \cref{corollary:global_consistency}.

\begin{restatable}[Identifiability of Generalized N Factor Models by Equality Constraint up to MEC]{theorem}{TheoremIdentifiabilityGNFM}\label{thm:identify_gnfm}
For  $\graph_1,\graph_2 \in \mathbb{G}_{\text{GNFM}}$,
if they are algebraically equivalent, i.e., $H(\graph_1)=H(\graph_2)$, then $\graph_1$ and $\graph_2$ belong to the same MEC (same skeleton and v-structures over all variables).
\vspace{-0em}
\end{restatable}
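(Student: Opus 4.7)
The plan is to show that for GNFMs, the rank constraints on $\Sigma_{\set{X}}$---which in the Gaussian linear setting constitute all equality constraints captured by $H(\graph)$---are fine enough to recover both skeleton and v-structures over all variables. The central tool is the trek-separation theorem of Sullivant--Talaska--Draisma: $\operatorname{rank}(\Sigma_{A,B}) \leq k$ iff some set of size $\leq k$ trek-separates $A$ from $B$. Since $H(\graph_1)=H(\graph_2)$, the two graphs must induce identical rank patterns on every submatrix of $\Sigma_{\set{X}}$, and my task is to translate this coincidence back to skeleton and v-structure identity.

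The first step is to recover the latent partition. For each group $\set{L}_p$ with pure children $\set{X}_p$ satisfying $|\set{X}_p|\geq 2|\set{L}_p|$ (condition (i) of \cref{definition:gnfm}), any disjoint subsets $A, B \subseteq \set{X}_p$ of sizes $\geq |\set{L}_p|$ yield $\operatorname{rank}(\Sigma_{A,B})=|\set{L}_p|$, because conditions (ii)--(iii) guarantee that every trek between $A$ and $B$ must pass through $\set{L}_p$ (there are no intra-group edges to route around it, and impure bypasses would violate purity of the children). I would then argue that the maximal blocks of observed variables exhibiting such uniformly bounded cross-ranks uniquely identify the pure-children blocks together with $|\set{L}_p|$. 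Since both graphs produce the same rank pattern, we obtain a canonical bijection between their latent groups via observed blocks.

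The second step is to recover the latent skeleton and v-structures. Pure children act as faithful probes for their parent group, so any trek-separation query among latent groups translates into a rank query on appropriate pure-children submatrices, with Schur complements of $\Sigma_{\set{X}_p}$ playing the role of ``conditioning on $\set{L}_p$.'' I would then mimic the PC/FCI argument with rank constraints replacing CI tests: two latent groups are adjacent iff no union of other groups trek-separates their pure children, and a triple $\set{L}_p\,\text{---}\,\set{L}_r\,\text{---}\,\set{L}_q$ forms a v-structure iff adding $\set{L}_r$'s pure-children probes to the separator raises the relevant rank by exactly $|\set{L}_r|$ (as opposed to leaving it unchanged in the non-collider case). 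Condition (ii) lets edges incident to a group be treated monolithically, so adjacencies between latent groups and any remaining observed variables---and the only v-structures involving observed variables, namely $\set{L}_p \to \node{X} \leftarrow \set{L}_q$ since observed variables are leaves---are recovered by the same machinery.

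The main obstacle will be the second step: establishing that rank constraints with only $2|\set{L}_p|$ pure children per group genuinely distinguish all v-structure configurations among latent groups. The doubled-size condition is exactly what provides two independent rank probes of the same group, which is needed to realize ``conditioning on $\set{L}_p$'' via Schur complements; any relaxation would collapse distinct v-structure patterns into the same rank pattern. I expect the bulk of the work to be a careful case analysis showing that any skeleton or v-structure discrepancy between $\graph_1,\graph_2 \in \mathbb{G}_{\text{GNF}}$ forces at least one differing rank constraint on some submatrix of $\Sigma_{\set{X}}$, contradicting $H(\graph_1)=H(\graph_2)$ and thereby yielding the theorem.
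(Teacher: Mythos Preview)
Your proposal is essentially the paper's approach: use trek-separation (Sullivant--Talaska--Draisma) to show that rank constraints on $\Sigma_{\set{X}}$ suffice to pin down, first, the latent-to-observed bipartite structure (via an induction on group size $k$, checking ranks of size-$(k{+}1)$ blocks against their complement), and second, the latent-to-latent structure up to MEC. Two small remarks. First, your opening claim that rank constraints ``constitute all equality constraints captured by $H(\graph)$'' is neither true in general nor needed: the argument only requires that rank constraints lie \emph{inside} $H(\graph)$ and are already sufficient to determine the MEC, so $H(\graph_1)=H(\graph_2)$ implies equal rank patterns implies same MEC. Second, for the latent-latent step the paper takes a slightly cleaner route than your Schur-complement/explicit-v-structure test: it shows directly that the d-separation relation among latent groups is recoverable from ranks, by placing one copy of $|\set{L}_l|$ pure children of each conditioning group into $\set{A}$ and a \emph{disjoint} second copy into $\set{B}$ (this is exactly where the $2|\set{L}_p|$ requirement bites), and checking whether $\operatorname{rank}\Sigma_{\set{A},\set{B}}$ equals the total conditioning size. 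Once d-separations are recovered, skeleton and v-structures follow by the standard CI-to-MEC argument, so you need not hand-build a separate collider test.
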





\begin{restatable}[Global Consistency by Score for Generalized N Factor Models]{corollary}{CorollaryGlobalConsistency}\label{corollary:global_consistency}
    Suppose a model follows \cref{definition:polcm} with $\graph^*\in \mathbb{G}_{\text{GNFM}}$
        and distribution $\Sigma^{*}_\set{X}$ satisfies \cref{assumption:generalized_faithfulness}.
        Given observation $\empX$
        and let $\mathbb{G}^* = \arg\max_{\graph\in\mathbb{G}_{\text{GNFM}}}
        \text{score}_{\text{ML}}(\graph,\empX)$.
        If  $\hat{\mathcal{G}}\in \mathbb{G}^*$ and 
        $\hat{\graph}\in \argmin_{\graph\in\mathbb{G}^*} \dim(\graph)$,
        then 
        $\hat{\mathcal{G}}$ and $\mathcal{G}^*$ are Markov equivalent in the large sample limit.
        \vspace{-0.5em}
\end{restatable}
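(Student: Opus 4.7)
The plan is to combine the two main results already established in this section: \cref{thm:equivalence_equality_constraints} (Algebraic Equivalence by Score and Dimension) and \cref{thm:identify_gnfm} (Identifiability of GNFM by Equality Constraint up to MEC). The first lets us conclude that any score-and-dimension optimal $\hat{\graph}$ is algebraically equivalent to $\graph^*$, while the second upgrades algebraic equivalence to Markov equivalence whenever both graphs lie in $\mathbb{G}_{\text{GNF}}$.

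First I would verify that the conclusion of \cref{thm:equivalence_equality_constraints} still applies when the outer $\argmax$ is restricted from $\mathbb{G}^n$ to the smaller class $\mathbb{G}_{\text{GNF}}$. The key observation is that $\graph^*\in\mathbb{G}_{\text{GNF}}$, so $\graph^*$ attains the global maximum of $\text{score}_{\text{ML}}$ on $\mathbb{G}^n$ in the large sample limit, and hence also on $\mathbb{G}_{\text{GNF}}$. Therefore $\graph^*\in\mathbb{G}^*$ and every element of $\mathbb{G}^*$ reproduces $\Sigma^{*}_{\set{X}}$. Because $\hat{\graph}$ has minimum dimension within $\mathbb{G}^*$, we in particular have $\dim(\hat{\graph})\leq \dim(\graph^*)$. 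Replaying the argument used in \cref{thm:equivalence_equality_constraints}, namely that under generalized faithfulness any graph achieving the maximum score with minimum dimension among the maximizers must impose exactly the same equality constraints as $\graph^*$, yields $H(\hat{\graph})=H(\graph^*)$, i.e., $\hat{\graph}$ and $\graph^*$ are algebraically equivalent.

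Second, since both $\hat{\graph}$ and $\graph^*$ lie in $\mathbb{G}_{\text{GNF}}$, the hypotheses of \cref{thm:identify_gnfm} are satisfied. Applying that theorem directly to the pair $(\hat{\graph},\graph^*)$, the algebraic equivalence $H(\hat{\graph})=H(\graph^*)$ immediately implies that they share the same skeleton and v-structures over all variables, which is precisely the statement of Markov equivalence claimed by the corollary.

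The main obstacle is the first step: ensuring that restricting the search space to $\mathbb{G}_{\text{GNF}}$ does not invalidate the algebraic equivalence argument of \cref{thm:equivalence_equality_constraints}. A priori one might worry that a minimum-dimension graph within this restricted class could satisfy $H(\hat{\graph})\subsetneq H(\graph^*)$, so some care is needed. However, because $\graph^*$ itself remains in the restricted class and serves as an explicit benchmark against which $\hat{\graph}$ is compared on both score and dimension, the same faithfulness-plus-minimality reasoning carries through verbatim. Once this step is settled, the subsequent application of \cref{thm:identify_gnfm} is immediate.
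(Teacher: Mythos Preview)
Your proposal is correct and follows essentially the same approach as the paper: first replay the argument of \cref{thm:equivalence_equality_constraints} (noting that $\graph^*\in\mathbb{G}_{\text{GNF}}$ so the restriction of the search space is harmless) to obtain $H(\hat{\graph})=H(\graph^*)$, and then invoke \cref{thm:identify_gnfm} to upgrade algebraic equivalence to Markov equivalence. The paper's proof is in fact terser than yours, simply stating that the proof of \cref{thm:equivalence_equality_constraints} carries over and then applying \cref{thm:identify_gnfm}; your explicit discussion of why the restriction to $\mathbb{G}_{\text{GNF}}$ does not break the argument is a welcome elaboration.
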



\cref{corollary:global_consistency} {establishes} the global consistency
of using likelihood score for structure identification.
Yet, it still requires
 impractical exact enumeration.
Furthermore, it is 
also unclear how to calculate the exact dimension 
for each graph with latent variables.
Fortunately, an efficient greedy search can be designed 
to identify the optimal structure
without capturing the dimension, as  shown in what follows.

\vspace{-1em}
\section{Score-based Greedy Search for Partially Observed Linear Causal Models}
\vspace{-0.5em}
\label{sec:method}
%
We begin with the general design for greedy search in \cref{sec:general_design},
specify the design in LGES for generalized N factor model in \cref{sec:design_gnfm},
and establish the asymptotic correctness  in \cref{sec:correctness_of_lges}.

\vspace{-0.5em}
\subsection{General Design for Greedy Search}
\vspace{-0.5em}
\label{sec:general_design}

\begin{wrapfigure}{r}{0.4\textwidth}
\vspace{-2em}
\setlength{\belowcaptionskip}{0mm}
  \vspace{-0.5em}
    \subfloat[$\state_{\text{init}}$ by Def. \ref{def:init_state},
    where latents are mutually fully connected by undirected edges and all latents cause all observed.]
    {
    \centering
    \includegraphics[width=0.36\textwidth]{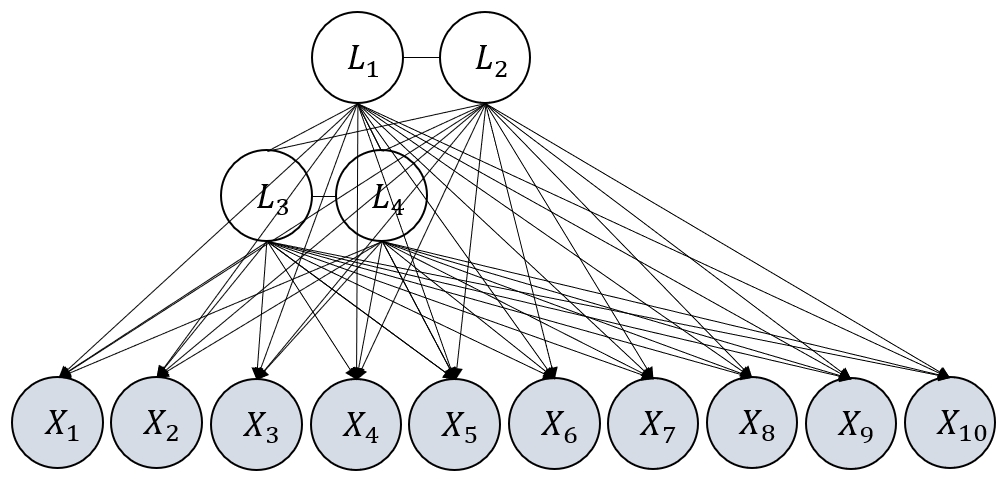}
    }
    \hspace{0.6em}
    \subfloat[$\state_{\text{phase1}}$, the output of Alg. \ref{alg:lges_phase1},
    where the number of latent and  the edges from latent to observed variables are determined.]
    {
    \centering
    \includegraphics[width=0.36\textwidth]{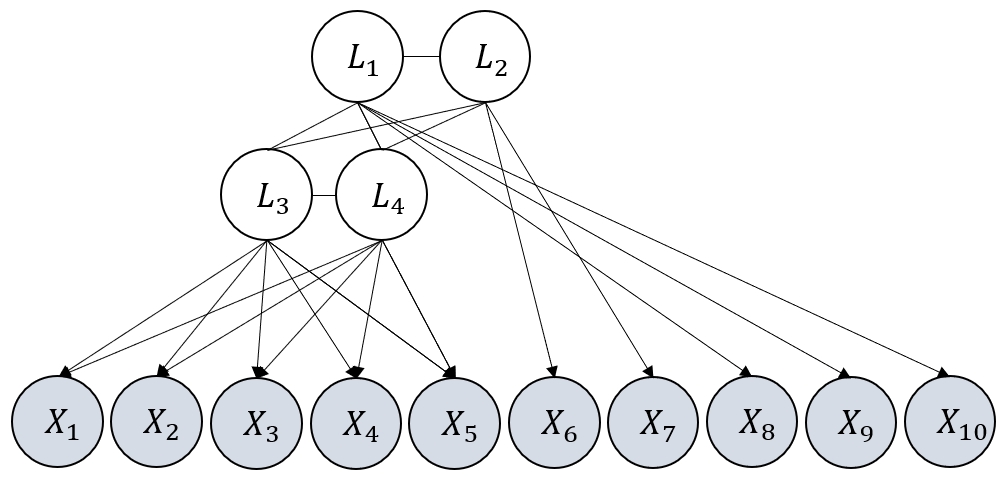}
    }
    \hspace{0.6em}
    \subfloat[$\state_{\text{final}}$, the output of Alg. \ref{alg:lges_phase2},
     represents the MEC of the ground truth $\graph^*$ (here the MEC only contains $\graph^*$).
     ]
    {
    \centering
    \includegraphics[width=0.36\textwidth]{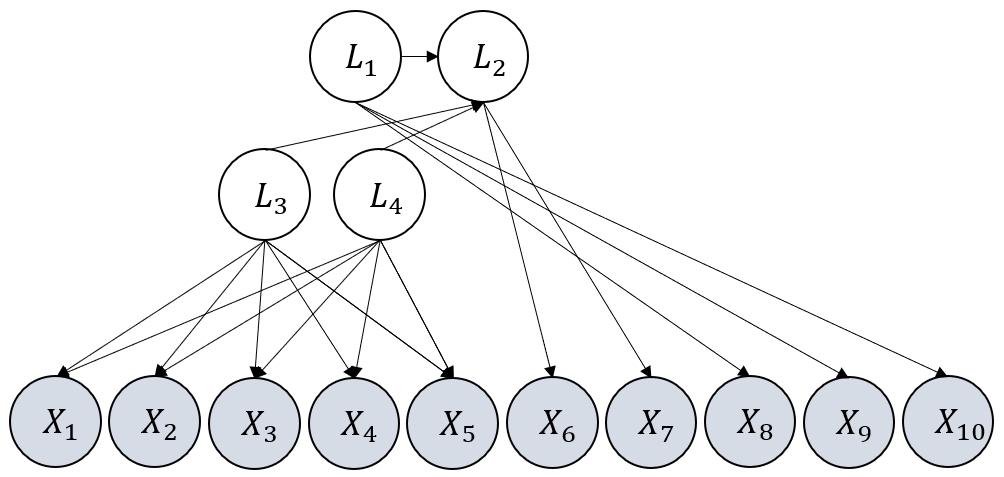}
    }
    \centering
    \vspace{-0em}
\caption{
 LGES illustration, where (a) is the initial state of Alg. \ref{alg:lges_phase1},
(b) is the output of  Alg. \ref{alg:lges_phase1}, and (c) is the final output.
 }
  \label{fig:process}
  \vspace{-5em}
\end{wrapfigure}

Our overall objective  is to
find a graph $\hat{\graph}$ 
such that it can generate $\empX$ equally well as the ground truth $\graph^*$,
while having a dimension that is as small as possible (as in \cref{thm:equivalence_equality_constraints} and \cref{corollary:global_consistency}).
To efficiently search over the graph space,
we follow the traditional wisdom GES  {\citep{chickering2002optimal}} and define three key elements for  a greedy search.
\vspace{-1mm}
\begin{itemize}[leftmargin=*, itemsep=-1pt]
\item A set of states.
\item A representation scheme for the states.
\item A set of operators.
\end{itemize}
\vspace{-0.5em}

A state represents a solution to the search problem
and we use $\state$ to represent state and $\setstate$ to represent a set of states.
The representation scheme defines an efficient way to represent the states.
As the structure can only be identified up to MEC,
{
we follow GES such that each state is a MEC,
which corresponds to an unique} Completed Partially Directed Graph (CPDAG  {in \cref{def:cpdag}}). Thus we also use $\state$ to refer to a CPDAG
and $\state(\graph)$ to transform a DAG $\graph$ into a CPDAG.
Finally, the set of operators is to transform one state to another state,
 in order to traverse the whole graph space systematically and efficiently.


For any two graphs  belong to the same state (i.e., the same MEC),
they share the same dimension and maximum likelihood score (Prop. 1 in \citep{ngscore}).
Thus, we also define $\text{score}_{\text{ML}}$ for a CPDAG $\state$,
as $\text{score}_{\text{ML}}(\state,\empX)=\text{score}_{\text{ML}}(\graph,\empX)$, for all $\graph\in\text{MEC}(\state)$.

Furthermore, the initial state $\state_{\text{init}}$ is often designed as a state that is a super graph of the ground truth, and 
 $\state_{\text{init}}$ can generate the observation equally well as the ground truth.
For example, the phase 1 of GES is to find such an initial state and thus the phase 2 of GES can focus on the delete operation.
Next, we introduce the detailed design of our novel method, LGES.

\vspace{-0em}
\subsection{Algorithm: Latent variable Greedy Equivalence Search (LGES)}
\vspace{-0.5em}
\label{sec:design_gnfm}

In this section we discuss the detailed design for Latent variable Greedy Equivalence Search (LGES)
for structure identification of generalized N factor models.
We begin with the initial state $\state_{\text{init}}$.

\begin{definition}[Initial State for Generalized N Factor Model]
    \label{def:init_state}
    Given $\set{X}$, $\state_{\text{init}}(\set{X})$ outputs a CPDAG such that
    it contains observed variables $\set{X}$ and $\lfloor \frac{|\set{X}|}{2}\rfloor$ latent variables,
     all latent variables are fully connected with undirected edge, and all latent variables cause all observed variables.
\end{definition}

In \cref{def:init_state}
it implicitly requires 
that the number of observed variables is at least twice the number of latent ones. We note that this latent-to-observed ratio is a property of the graphical assumption of GNFM in \cref{definition:gnfm}, and  thus invariant to the design of a method.
An example  of $\state_{\text{init}}(\set{X})$ can be found in \cref{fig:process},
where (c) is the ground truth $\graph^*$ and (a) is the initial state $\state_{\text{init}}(\set{X})$ by \cref{def:init_state}.
The reason why we design $\state_{\text{init}}$ as such lies in the properties of $\state_{\text{init}}$ formalized in  \cref{lemma:init_can_generate}.

\begin{restatable}[Properties of Initial State]{lemma}{LemmaInitCanGenerate}\label{lemma:init_can_generate}
    Suppose a model follows \cref{definition:polcm} with $\graph^*\in \mathbb{G}_{\text{GNFM}}$ and we are given observation $\empX$.
    Then $\state_{\text{init}}$ is a supergraph of $\state(\graph^*)$
    and $\state_{\text{init}}$ can generate the observed distribution,
    i.e., $\text{score}_{\text{ML}}(\state_{\text{init}},\empX) = \max_{\graph\in\mathbb{G}_{\text{GNFM}}}
    \text{score}_{\text{ML}}(\graph,\empX)$ in the large sample limit.
    \vspace{-0.5em}
\end{restatable}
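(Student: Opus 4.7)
The plan is to establish the two claims --- that $\state_{\text{init}}$ contains $\state(\graph^*)$ as a CPDAG supergraph, and that it attains the same maximum likelihood --- in that order, since the score equality follows almost immediately once the structural containment is in place.

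\textbf{Counting latents and the supergraph claim.} First I would check that $\state_{\text{init}}$ actually has enough latents. By \cref{definition:gnfm}, there is a partition $\{\set{L}_p\}$ of the latents of $\graph^*$ such that each block $\set{L}_p$ carries a set $\set{X}_p$ of at least $2|\set{L}_p|$ pure observed children, i.e., children whose parent set equals exactly $\set{L}_p$. Two distinct blocks induce distinct parent sets, so the families $\{\set{X}_p\}$ are pairwise disjoint, giving $|\set{X}| \geq 2\sum_p |\set{L}_p| = 2m$ and hence $\floor{|\set{X}|/2} \geq m$. I can therefore fix an injection of the latents of $\graph^*$ into those of $\state_{\text{init}}$, which is legitimate because latent identities are defined only up to permutation. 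Under this injection, every $\node{L}\to\node{X}$ edge of $\graph^*$ is already present in $\state_{\text{init}}$ (which contains all such edges), every $\node{L}\to\node{L}$ edge of $\graph^*$ corresponds to an undirected latent edge in $\state_{\text{init}}$, and GNFM forces observed variables to be leaves, ruling out any $\node{X}\to\node{V}$ edge. Orienting the undirected latent edges of $\state_{\text{init}}$ along any topological order that extends $\graph^*$'s order on the mapped latents (placing the unused extra latents first) yields a DAG $\tilde\graph$ in the MEC of $\state_{\text{init}}$ whose edge set contains that of $\graph^*$, which gives the supergraph claim.

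\textbf{Matching the likelihood.} To show $\state_{\text{init}}$ can generate the observed distribution, I would instantiate the parameters of $\tilde\graph$ by copying the ground truth parameters $(F^*, \Omega^*)$ of $\graph^*$ onto the embedded edges, setting every remaining entry of $F$ (the extra $\node{L}\to\node{L}$ weights and the weights emanating from unused latents) to zero, and assigning an arbitrary positive noise variance to the unused latents. The resulting $(F, \Omega)$ is feasible since $\supp(F) \subseteq \supp(F_{\tilde\graph})$, and applying \cref{proposition:covariance_matrix} with these parameters yields exactly $\Sigma^*_{\set{X}}$. In the large sample limit $\empX \to \Sigma^*_{\set{X}}$, the log-likelihood evaluated at these parameters equals the population optimum, so $\text{score}_{\text{ML}}(\state_{\text{init}}, \empX) \geq \text{score}_{\text{ML}}(\graph^*, \empX)$; since no graph can exceed the population-level upper bound of the Gaussian log-likelihood at the true covariance and $\graph^* \in \mathbb{G}_{\text{GNF}}$ already attains this bound, both sides equal $\max_{\graph \in \mathbb{G}_{\text{GNF}}} \text{score}_{\text{ML}}(\graph, \empX)$.

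\textbf{Main obstacle.} The chief difficulty is bookkeeping rather than any deep inequality: latents are defined only up to relabeling, so the supergraph relation must be interpreted after fixing a correspondence, and that correspondence must be simultaneously compatible with the edge containment and with at least one DAG extension in the MEC of $\state_{\text{init}}$. The pairwise disjointness of pure-child sets guaranteed by GNFM is precisely what makes the latent-count bound $\floor{|\set{X}|/2} \geq m$ tight enough for such an embedding to exist; once the injection and the orientation of latent edges in $\tilde\graph$ are fixed, the likelihood equality reduces to the routine observation that a supergraph can reproduce any distribution representable by a subgraph by zeroing out the excess edge weights.
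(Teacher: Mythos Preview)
Your proposal is correct and follows essentially the same approach as the paper: bound the number of latents via the GNFM pure-child requirement, conclude the supergraph relation up to latent relabeling, and deduce that $\state_{\text{init}}$ can generate the observed distribution. The paper's proof is much terser---it asserts the count $|\set{L}|\leq\lfloor|\set{X}|/2\rfloor$ and the supergraph claim in one line each---whereas you make explicit the disjointness of the pure-child sets $\set{X}_p$, the choice of a DAG extension $\tilde\graph$ in the MEC, and the parameter-zeroing argument for the likelihood match; these details are implicit in the paper but are exactly what a careful reader would want spelled out.
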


The core spirit of LGES is that we begin with a state that can generate the observation. Each time we delete some edges and see whether the new CPDAG can still generate the observation.
If so, we keep it as the new state; otherwise we try a different deletion.
%
As an edge deletion leads to smaller or equal dimension, the process is in essence finding the CPDAG with smallest dimension while keeping the ability of generating the observation.
Next we discuss the operators that connect between states.

\begin{definition}[Delete Operator $\operator_{\set{L}\set{X}}$]
    \label{def:operatorlx}
    \vspace{-0em}
    $\operator_{\set{L}\set{X}}(\state,\set{L},\set{X})$ returns a CPDAG that is the same as $\state$ except that 
    all edges from $\set{L}$ to $\set{X}$ are deleted.
\end{definition}

\begin{definition}[Delete Operator $\operator_{\set{L}\set{L}}$]
    \label{def:operatorll}
    $\operator_{\set{L}\set{L}}(\state,\set{L}_1,\set{L}_2,\set{H})$ returns a CPDAG that is the same as $\state$ except that 
    (i) all edges between $\set{L}_1$ and $\set{L}_2$ are deleted,
    (ii) for each $\node{H}\in\set{H}$,
    directing the previously undirected edge between $\set{L}_1$ and $\node{H}$ as $\set{L}_1\rightarrow\node{H}$
    and directing the previously undirected
    edge between $\set{L}_2$ and $\node{H}$ as $\set{L}_2\rightarrow\node{H}$.
    \vspace{-0.5em}
\end{definition}


$\operator_{\set{L}\set{X}}$ is designed to delete edges from latent variables to observed variables
while 
$\operator_{\set{L}\set{L}}$ (partially inspired by \citet{chickering2002optimal}) is designed to delete relations among  two groups of latent variables  $\set{L}_1$ and $\set{L}_2$.
%
Our LGES is built upon these two operators and includes two phases.
Phase 1 (summarized in Alg \ref{alg:lges_phase1}) is to recover the structure between latent variables and observed variables.
Roughly speaking, each time the algorithm chooses some latent variables and some observed variables from the current state, 
 deletes all the edges between them to get a neighbouring state, and check whether this neighbouring state can generate the 
 observation $\empX$ (up to a certain tolerance level $\delta$).
 If so, this neighboring state is taken as the new state; otherwise the algorithm looks for a new combination for edge deletion.
An example of the output of phase 1 is in Fig \ref{fig:process} (b),
 where the structure between latent and observed are expected to be the same as the ground truth (also formalized in \cref{lemma:correctphase1}).

Phase 2 (summarized in Alg \ref{alg:lges_phase2}) of LGES, which is partially inspired by \citet{chickering2002optimal},
aims to identify the structure among latent variables. 
Roughly, each time the algorithm chooses two groups of latent variables, 
deletes all the edges between them to get a neighbouring state;
if this state can generate the 
observation (up to $\delta$),
then takes it as the new state; otherwise looks for another two groups.
$\delta$ controls the sparsity level in the finite sample case:
with bigger $\delta$, deletions are more likely to be kept (ablation study in \cref{sec:synthetic_data}).
An illustration of the output of phase 2 is in Fig \ref{fig:process} (c),
which is expected to be the same as the ground truth (which is formalized in \cref{theorem:correctphase2}).

As implied by \cref{thm:equivalence_equality_constraints},
our objective is to search over the graph space to find a graph $\hat{\mathcal{G}}$ such that (i) $\hat{\mathcal{G}}$
 has the best likelihood and  (ii) at the premise of ensuring (i), the dimension of $\hat{\mathcal{G}}$ should be as small as possible. 
 Below are the key designs of LGES to ensure this goal. 
\vspace{-0.5em}
 
 \begin{itemize}[leftmargin=*, itemsep=-1pt]
\item We start with a graph that is guaranteed to be a super graph of the ground truth.

\item At each step, we try to delete some edges. Only when the likelihood after the deletion is still the best do we keep the deletion. This guarantees that (i) always holds.

\item  An edge deletion operation will either keep the dimension the same or decrease the dimension. Thus, throughout our search process, the dimension will decrease monotonically.

\item The design of deletion operators ensure at each step, the current graph  is always a super-graph of the ground truth, through out the whole process.  This is because if the post-deletion graph is not a super-graph of the ground truth, additional equality constraints will be introduced such that the  post-deletion graph cannot reach the best likelihood and thus this deletion will not be kept.

 \item By the end of the process, LGES will arrive at the ground truth. This can be proved by contradiction in a rough sense as follows (detailed in proof). 
Suppose the final state $\mathcal{G}'$ is not optimal. As it must be a super graph of  $\mathcal{G}^*$, there must exist a sequence of deletion to transform  
$\mathcal{G}'$ into $\mathcal{G}^*$. Suppose the first deletion in the sequence leads to $\mathcal{G}''$. As $\mathcal{G}''$ is also a super graph of
$\mathcal{G}^*$, the score of $\mathcal{G}''$ is also the best, and thus the algorithm would not terminate at $\mathcal{G}'$, yielding a contradiction.
\end{itemize}

\vspace{-1em}
\subsection{Asymptotic Correctness of LGES}%
\vspace{-0.5em}
\label{sec:correctness_of_lges}
Here we establish the asymptotic correctness of LGES.
Specifically,
if the ground truth satisfies the generalized N factor model,
LGES can asymptotically produce the correct Markov equivalence class over both observed and
latent variables, as in \cref{lemma:correctphase1} and \cref{theorem:correctphase2} (all proofs in Appendix).

\begin{restatable}[Correctness of Phase 1 of LGES]{lemma}{LemmaCorrectnessPhaseone}\label{lemma:correctphase1}
    Suppose a model follows \cref{definition:polcm} with $\graph^*\in \mathbb{G}_{\text{GNFM}}$ and we are given observation $\empX$.
    In the large sample limit the output $\state_{\text{phase1}}$ of \cref{alg:lges_phase1} is a CPDAG such that the number of latent variables in
    $\state_{\text{phase1}}$ is the same as that of $\graph^*$
    and the edges from $\set{X}$ to $\set{L}$ in $\state_{\text{phase1}}$
    is the same as that of $\graph^*$,
    up to permutation of latent variables.
\end{restatable}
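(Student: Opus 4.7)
The plan is to carry a single structural invariant through every accepted deletion in \cref{alg:lges_phase1} and then argue that the termination condition forces the remaining graph to agree with $\graph^*$ on the $\set{L}$-to-$\set{X}$ block up to relabelling of latents. First I would establish a \emph{super-graph invariant}: after each iteration, there exists an injection from the latents of $\graph^*$ into those of the current state $\state$ under which $\state$ contains every $\set{L}\to\set{X}$ edge of $\graph^*$. The base case is \cref{lemma:init_can_generate}. For the inductive step, consider an accepted $\operator_{\set{L}\set{X}}(\state,\set{L},\set{X})$. If this move deleted any edge realising a ground-truth $\set{L}\to\set{X}$ edge under the current relabelling, then generalized faithfulness (\cref{assumption:generalized_faithfulness}) would force a new equality constraint outside $H(\graph^*)$ to hold in $\Sigma_\set{X}$, which is violated by $\Sigma^*_\set{X}$; the score would drop strictly below the optimum and the greedy rule would reject the move. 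Hence only spurious edges can be removed, and the invariant is preserved.

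Second, I would establish \emph{maximality at termination}: when the loop halts, no $\operator_{\set{L}\set{X}}$ move preserves the optimal score. Combined with the invariant, this implies that every spurious $\set{L}\to\set{X}$ edge has been pruned. Suppose otherwise: some edge $\node{L}\to\node{X}$ of $\state_{\text{phase1}}$ is not present in $\graph^*$ under the fixed relabelling. Using the GNFM partition of $\graph^*$ from \cref{definition:gnfm}, I would exhibit a subset pair $(\set{L}',\set{X}')$ whose block deletion $\operator_{\set{L}\set{X}}(\state_{\text{phase1}},\set{L}',\set{X}')$ removes only spurious edges; by \cref{proposition:covariance_matrix} the resulting model still parametrises $\Sigma^*_\set{X}$, so the score stays optimal, contradicting termination. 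As a consequence, any surplus latent of $\state_{\text{init}}$ outside the image of the relabelling has all of its observed children severed by the same argument and therefore becomes effectively absent, which makes the latent count of $\state_{\text{phase1}}$ agree with that of $\graph^*$ up to permutation.

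The main obstacle will be the existence of a safe block deletion in the second step. Because GNFM forces latents within a common group to share the entire block of observed children, naively deleting a single stray edge may break that symmetry and push the graph outside $\mathbb{G}_{\text{GNF}}$, so the operator has to act on a carefully matched block rather than on one edge. My plan is to read the block off directly from the GNFM partition of $\graph^*$: given a stray edge $\node{L}\to\node{X}$, take $\set{L}'$ to be the entire GNFM group containing $\node{L}$'s image (together with any surplus latents currently pointing into the shared child set) and take $\set{X}'$ to be that shared child set; then verify via \cref{proposition:covariance_matrix} that the post-deletion model still parametrises $\Sigma^*_\set{X}$ with non-negative noise variances, and invoke \cref{thm:identify_gnfm} to conclude that algebraic equivalence, and hence the optimal score, is preserved. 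Combining the super-graph invariant with this existence lemma, together with the monotone decrease of dimension under $\operator_{\set{L}\set{X}}$, yields \cref{lemma:correctphase1}.
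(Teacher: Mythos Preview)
Your proposal models \cref{alg:lges_phase1} as a generic greedy-deletion loop that halts once no score-preserving $\operator_{\set{L}\set{X}}$ move is left. That is not how Phase~1 terminates: the outer \texttt{while} is indexed by $k$ and exits when $k>|\set{X}_t|/2$, independently of whether further score-preserving block deletions exist among the moves you construct. Consequently your maximality step (``if a spurious edge survived, I can exhibit a block deletion that keeps the optimal score, contradicting termination'') does not go through. To repair it you would have to show that the specific $k$-indexed moves in lines 5--25 already exhaust all relevant deletions before $k$ runs out---and that is precisely the content the paper supplies and your plan omits.

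The paper's proof proceeds by induction on $k$, mirroring the algorithm. The key device is that $\operator_{\set{L}\set{X}}(\state,\set{L}\setminus\set{L}',\set{X}_j)$ with $|\set{L}'|=k$ imposes the rank constraint $\rank(\Sigma_{\set{X}_j,\set{X}\setminus\set{X}_j})=k$ on the post-deletion state; under \cref{assumption:generalized_faithfulness} this constraint lies in $H(\graph^*)$ iff the observed variables in $\set{X}_j$ share a common size-$k$ latent parent group in $\graph^*$ (given they were not already placed at a smaller level). Thus every accepted move at level $k$ correctly discovers a size-$k$ group, and conversely every such group is found at level $k$ because its $\geq 2k$ pure children guaranteed by \cref{definition:gnfm} witness the rank drop. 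Your supergraph invariant never isolates this rank-to-group correspondence, so it cannot certify that the $k$-structured search is complete. A secondary issue: the intermediate states in Phase~1 keep all latents mutually fully connected and therefore do not lie in $\mathbb{G}_{\text{GNF}}$, so invoking \cref{thm:identify_gnfm} to argue score preservation after your block deletion is premature; the paper instead reasons directly about which rank constraints are or are not in $H(\graph^*)$.
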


\begin{restatable}[Correctness of LGES]{theorem}{TheoremCorrectnessPhaseTwo}\label{theorem:correctphase2}
    Suppose a model follows \cref{definition:polcm} with $\graph^*\in \mathbb{G}_{\text{GNFM}}$ and we are given observation $\empX$.
    In the large sample limit the output $\state_{\text{final}}$ of \cref{alg:lges_phase1,alg:lges_phase2} is a CPDAG 
    that represent the MEC of $\graph^*$,
    up to permutation of latent variables.
\end{restatable}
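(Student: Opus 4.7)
The plan is to build on Lemma~\ref{lemma:correctphase1}, which already guarantees that after Phase~1 the number of latent variables and all edges from $\set{L}$ to $\set{X}$ agree with $\graph^*$ (up to permutation of latent labels). The only freedom left in $\state_{\text{phase1}}$ concerns edges among latent variables, and Phase~2 must prune these so that the remaining structure represents the MEC of $\graph^*$. I would maintain two invariants throughout the Phase~2 search: (I1) the current CPDAG $\state$ is a super-CPDAG of $\state(\graph^*)$, i.e., it admits a DAG extension that is a super-graph of $\graph^*$; and (I2) $\text{score}_{\text{ML}}(\state,\empX)$ equals $\max_{\graph\in\mathbb{G}_{\text{GNF}}}\text{score}_{\text{ML}}(\graph,\empX)$ in the large-sample limit. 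If both invariants hold at termination, Corollary~\ref{corollary:global_consistency} will immediately finish the proof.

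The first step is to verify the invariants at the start of Phase~2. Invariant (I1) holds because $\state_{\text{phase1}}$ has the correct $\set{L}\to\set{X}$ structure and retains all possible edges among latent variables. Invariant (I2) follows from Lemma~\ref{lemma:init_can_generate}, combined with the fact that every accepted Phase~1 deletion kept the score at its maximum (within tolerance $\delta\to 0$). The next step is to show both invariants are preserved by $\operator_{\set{L}\set{L}}$. Suppose an applied deletion would produce a state $\state'$ that is no longer a super-graph of $\graph^*$. Then $\state'$ imposes an equality constraint $\kappa\in\mathbb{H}^n$ that is not in $H(\graph^*)$; by generalized faithfulness (Assumption~\ref{assumption:generalized_faithfulness}), $\Sigma^*_\set{X}$ does not satisfy $\kappa$, so $\state'$ cannot attain the maximum likelihood and the acceptance test rejects it. Hence accepted deletions preserve (I1), while (I2) is enforced by the acceptance test itself.

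Finally, termination is handled by contradiction. Suppose $\state_{\text{final}}$ is not in the MEC of $\graph^*$. By (I1), $\state_{\text{final}}$ strictly super-contains $\state(\graph^*)$, so there exist latent groups $\set{L}_1,\set{L}_2$ (in the sense of Def.~\ref{definition:gnfm}) whose inter-group edges appear in $\state_{\text{final}}$ but not in $\state(\graph^*)$. Removing these edges via an appropriate $\operator_{\set{L}\set{L}}(\state_{\text{final}},\set{L}_1,\set{L}_2,\set{H})$ yields a state that is still a super-graph of $\state(\graph^*)$, and hence (by the same faithfulness argument run in reverse) still attains the maximum likelihood. Such a deletion would therefore be accepted, contradicting the termination of Phase~2. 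Combining termination with (I1), (I2), and Corollary~\ref{corollary:global_consistency}, $\state_{\text{final}}$ must represent the MEC of $\graph^*$.

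The main obstacle will be formalizing the super-graph preservation step at the CPDAG level, because $\operator_{\set{L}\set{L}}$ simultaneously removes a block of edges and orients previously undirected edges through the auxiliary set $\set{H}$. I need to verify that every collection of extra edges between two latent partitions can actually be removed by some legal choice of $(\set{L}_1,\set{L}_2,\set{H})$ without spuriously creating v-structures or cycles that would force the resulting object out of $\mathbb{G}_{\text{GNF}}$. The partition structure in Def.~\ref{definition:gnfm}, where groups are internally nonadjacent and interact uniformly with outside variables, is exactly the property that makes this step go through; carefully exploiting this uniform-interaction property is the delicate part of the argument.
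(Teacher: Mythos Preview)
Your overall strategy---maintain ``super-graph of $\graph^*$'' and ``maximum score'' as invariants, then derive a contradiction at termination---is sound and matches the paper's high-level plan. The substantive difference is in how the termination contradiction is obtained. The paper bypasses your invariant (I1) entirely and instead invokes Theorem~4 of \citet{chickering2002optimal}: since any DAG $\graph'$ in the terminal state attains maximum likelihood, it is an independence map of the true distribution, so there is a sequence of covered edge reversals and edge additions transforming $\graph^*$ into $\graph'$. The DAG $\graph''$ immediately preceding the last addition then lies in a state reachable from $\state'$ by a single delete, and $\graph''$ still generates the observation---contradicting termination. This route is shorter because Chickering's result already packages the CPDAG-level bookkeeping (covered reversals, legal neighbouring states), which is precisely the obstacle you flag as the delicate part of your direct construction of a legal $\operator_{\set{L}\set{L}}$ move.

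Two smaller points on your version. First, the implication ``$\state'$ not a super-graph of $\graph^*$ $\Rightarrow$ $\state'$ imposes a constraint outside $H(\graph^*)$'' is not immediate from Assumption~\ref{assumption:generalized_faithfulness} alone; you need the bridge from the proof of Theorem~\ref{thm:identify_gnfm} that d-separations among latent groups are witnessed by rank constraints on the observed covariance (otherwise a non-super-graph with the correct $\set{L}\to\set{X}$ edges could in principle still satisfy $H(\state')\subseteq H(\graph^*)$). Second, your closing appeal to Corollary~\ref{corollary:global_consistency} is unnecessary and slightly misapplied: that corollary requires $\hat\graph$ to have \emph{minimum dimension} among maximum-likelihood graphs, which greedy termination does not by itself establish. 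Your termination contradiction already delivers $\state_{\text{final}}\in\text{MEC}(\graph^*)$ directly, so the corollary can be dropped.
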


\begin{center}
  \begin{table*}[tb]
  \vspace{-0mm}
  \vspace{-1em}
    \caption{F1 score and SHD (mean(standard error)) of each compared method (LGES, FOFC, GIN, and RLCD) across different sample sizes.
      $\uparrow$  means the bigger the better while $\downarrow$ the smaller the better.}
     \vspace{-0.5em}
     \label{tab:result}
    \footnotesize
    \center 
  \begin{center}
  \resizebox{\linewidth}{!}{
  \begin{tabular}{|c|c|c|c|c|c|c|c|c|c|}
    \hline  \multicolumn{1}{|c|}{} &\multicolumn{4}{c|}{{F1 score for skeleton $\uparrow$} }&\multicolumn{4}{c|}{{SHD for MEC $\downarrow$} }\\
    \hline 
    \multicolumn{1}{|c|}{Sample size}  & \textbf{LGES} & FOFC & GIN & RLCD & \textbf{LGES} & FOFC & GIN & RLCD\\
    \hline 
    100 & \textbf{0.60} (0.01) & 0.55 (0.02) & 0.27 (0.01)&0.33 (0.01) &\textbf{20.87} (0.72) & 20.8 (0.34) &26.76 (0.36) &35.84 (3.01)\\
    \hline 
    200 & \textbf{0.72} (0.02) &0.56 (0.02) &0.36 (0.01) &0.49 (0.02) &\textbf{14.03} (0.85) & 18.0 (0.29) &24.14 (0.45) &28.58 (2.59)\\
    \hline 
    500 & \textbf{0.79} (0.02) &0.58 (0.02) &0.42 (0.01) &0.69 (0.02) &\textbf{10.02} (0.69) & 16.5 (0.31) &22.46 (0.39) &14.68 (0.96)\\
    \hline 
    1000& \textbf{0.82} (0.02)  &0.61 (0.03)  & 0.47 (0.01) &0.76 (0.02)  & \textbf{8.80} (0.70) &  16.1 (0.29)& 20.88 (0.54) &11.24 (0.86)\\
    \hline 
  \end{tabular}
  }
  \end{center}
  \vspace{-0.5em}
  \end{table*}
  \end{center}

\begin{center}
  \begin{table*}[tb]
  \vspace{-0mm}
    \caption{Under model misspecification,
    F1 score and SHD  (mean(standard error)) 
 of each compared method across different sample sizes.
      $\uparrow$  means the bigger the better while $\downarrow$ the smaller the better.}
     \vspace{-0.5em}
     \label{tab:misspecification}
    \footnotesize
    \center 
  \begin{center}
  \resizebox{\linewidth}{!}{
  \begin{tabular}{|c|c|c|c|c|c|c|c|c|c|}
    \hline \textbf{Non-gaussian} &\multicolumn{4}{c|}{{F1 score for skeleton $\uparrow$} }&\multicolumn{4}{c|}{{SHD for MEC $\downarrow$} }\\
    \hline 
    \multicolumn{1}{|c|}{Sample size}  & \textbf{LGES} & FOFC & GIN & RLCD & \textbf{LGES} & FOFC & GIN & RLCD\\
        \hline 
   {100}& {\textbf{0.57} (0.03)} & {0.40 (0.03)} & {0.27 (0.01)} & {0.35 (0.01)} & {\textbf{21.70} (1.67)} &  {22.72 (0.34)} &{26.76 (0.36)} &{39.31 (7.37)}\\
        \hline 
    { 200} & {\textbf{0.74} (0.04)} & {0.42 (0.03)} & {0.36 (0.01)} & {0.46 (0.03)} & {\textbf{12.67} (1.97)} & {18.10 (0.50)} & {24.14 (0.45)} &{35.12 (6.52)}\\
    \hline 
    500 & \textbf{0.78} (0.02) &0.40 (0.03) &0.42 (0.01) &0.68 (0.01) &\textbf{10.72} (0.93) & 17.66 (0.49) &22.46 (0.39) &14.81 (0.63)\\
    \hline 
    1000& \textbf{0.79} (0.02)  &0.39 (0.03)  & 0.47 (0.01) &0.75 (0.02)  & \textbf{10.12} (0.82) &  17.82 (0.44)& 20.88 (0.54) &12.45 (0.77)\\
    \hline 
    \hline 
    \textbf{Non-linear} &\multicolumn{4}{c|}{{F1 score for skeleton $\uparrow$} }&\multicolumn{4}{c|}{{SHD for MEC $\downarrow$} }\\
    \hline 
    \multicolumn{1}{|c|}{Sample size}  & \textbf{LGES} & FOFC & GIN & RLCD & \textbf{LGES} & FOFC & GIN & RLCD\\
    \hline 
   {100} & {\textbf{0.58} (0.02)} & {0.35 (0.03) }& {0.32 (0.01)} & {0.36 (0.02)} &{\textbf{19.21} (1.29)} & {21.48 (0.44)} &{29.28 (0.11)} &{37.01 (4.77)}\\
            \hline 
    {200} & {\textbf{0.65} (0.01)} &{0.33 (0.03)} & {0.38 (0.01)} & {  0.45(0.02)} & { \textbf{18.22} (1.11)} & {21.80 (0.41)} &{27.28 (0.28)} & {31.73 (4.33)}\\
    \hline 
    500 & \textbf{0.68} (0.02) &0.26 (0.04) &0.50 (0.02) &0.60 (0.02) &\textbf{16.6} (1.24) & 19.98 (0.46) &23.38 (0.64) &19.89 (1.08)\\
    \hline 
    1000& \textbf{0.71} (0.02)  &0.23 (0.04)  & 0.52 (0.02) &0.65 (0.01)  & \textbf{15.0} (0.95) &  20.10 (0.54)& 23.32 (0.73) &17.56 (0.77)\\
    \hline 
  \end{tabular}
  }
  \end{center}
  \vspace{-1em}
  \end{table*}
  \end{center}

\vspace{-1.5em}
\section{Experiments}
\subsection{Synthetic Setting and Evaluation Metric}
\vspace{-0.5em}
%
In our  synthetic experiments,
we validate LGES by comparing it with 
multiple latent variable causal discovery methods, including
FOFC~\citep{kummerfeld2016causal}, GIN~\citep{xie2020generalized},
and RLCD~\citep{dong2023versatile}.
The ground truth model follows Def. \ref{definition:polcm}
where  each edge coefficient $f_{ji}$ is randomly sampled uniformly from $[-5, 5]$
and the variance for each noise term uniformly from $[0.1,1]$.
As GIN assumes non-gaussianity, we use uniform distribution for the noise when testing GIN.
 We consider 20 randomly generated structures
that satisfy~Def. \ref{definition:gnfm} as the ground truth structure (examples in Fig. \ref{fig:illustration_of_graphs} in Appendix). 
 On average, each ground truth graph contains 20 variables and 5 of them are latent.
 { For the output of each method, we first transform the output into a CPDAG and then compare it with the ground truth CPDAG by calculating} F1 score over the skeleton and SHD over the MEC as  evaluation metrics,
 and consider four different sample sizes: 100, 200, 500, 1000.
 We adopt 10 random seeds to generate the ground truth causal model and observational data, and report the mean and standard error.
 \looseness=-1

\begin{table}[t]
\vspace{-1em}
\begin{minipage}[c]{0.5\linewidth}
\centering
    \caption{Model fitness scores on { r}eal-life datasets to validate LGES ($\uparrow$  the bigger the better).}
    \label{tab:fitness}
    \vspace{-1em}
    \resizebox{\linewidth}{!}{
    \begin{tabular}{|c|c|c|c|c|}
        \hline  \multicolumn{2}{|c|}{Scenarios} &\multicolumn{3}{c|}{{Model Fitness Scores}}\\
        \hline 
         {Dataset} &{Structures} & RMSEA \textbf{$\downarrow$}  & CFI \textbf{$\uparrow$}& TLI\textbf{$\uparrow$} \\
         \hline 
        \multirow{2}{*}{{Big Five}} & \tiny{\textbf{\cref{fig:bigfive} by LGES}}  & \textbf{0.054} & \textbf{0.874} &\textbf{0.855}\\
        \cline{2-5}
         & \tiny{ By \citet{Goldberg1993}}
          & 0.072& 0.767 & 0.746\\
          \cline{2-5}
            \hline
        \multirow{2}{*}{{T Burnout}} & \tiny{\textbf{\cref{fig:burnout} by LGES}} & \textbf{0.067} & \textbf{0.876} &\textbf{0.865}\\
        \cline{2-5}
         & \tiny{ By \citet{Byrne1994b}}  & 0.096& 0.753&0.727\\
        \cline{2-5}
             & \tiny{By \citet{Byrne2010}}  & 0.072& 0.861&0.847\\
        \hline
        \multirow{2}{*}{{M-tasking}} & \tiny{\textbf{\cref{fig:multitasking} by LGES}}  & \textbf{0.068} & \textbf{0.977} &\textbf{0.965}\\
        \cline{2-5}
         & \tiny{By \citet{Himi2019}} &0.087 &0.962 &0.943\\
        \hline
      \end{tabular}
      }
\end{minipage}\hfill
\hspace{0.5em}
\begin{minipage}[c]{0.48\linewidth}
  \centering
  \vspace{1em}
\includegraphics[width=\linewidth]{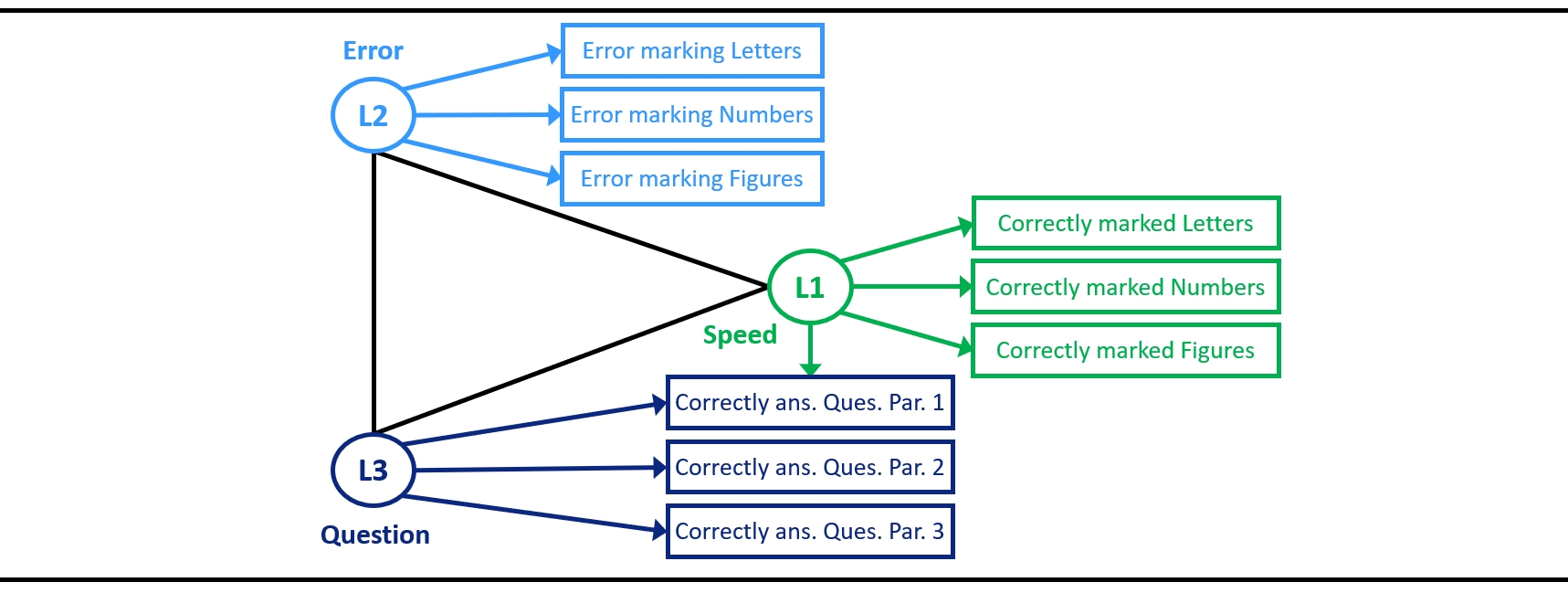}
\vspace{-2em}
 \captionof{figure}{\small Causal structure (CPDAG) recovered by LGES on Multi-tasking behavior dataset.}
    \label{fig:multitasking}
\end{minipage}
\end{table}

 \vspace{-0.5em}
\subsection{Performance on Synthetic Data and 
Ablation Study on $\delta$}
\vspace{-0.5em}
\label{sec:synthetic_data}
The F1 scores of skeletons and SHDs of MECs are reported  in Table~\ref{tab:result}.
As shown, the proposed LGES achieves the best F1 score (bigger better) and SHD (smaller better) performance compared to all baselines.
For example, with sample size 1k, LGES achieves a F1 score of 0.82 and a SHD of 8.8, where the runner-up achieves 0.76 and 11.24 respectively.
Another key observation is that our score-based method can still work well with a very small sample size,
while  constraint-based methods may not.
%
E.g., with only 100 datapoints, LGES still achieves a F1 score of 0.60, while constraint-based method GIN (CI-test-based) and RLCD (rank-test-based)
only achieves 0.27 and 0.33 respectively.
The reason why constraint-based methods do not work well with a small sample size may lie in that
with small sample size the power of the test is limited and and the null distribution may be very different from the asymptotic case,
both of which aggravate the issue of error { propagation}.
On the contrary, our score-based method may not suffer from these issues.
\looseness=-1

 Similar to  the hyper-parameter $\lambda$ in GES, in practice we can tune the value of  $\delta$ to control the sparsity level of the result (the bigger $\delta$ the sparser). 
 In our experiments, $\delta$ is set as $\delta=0.25\times\frac{\log(N)}{N}$, where $N$ is the sample size. This design follows the spirit of BIC score in GES such that $\delta\rightarrow0$ when $N\rightarrow\infty$. 
    The ablation study to analyze the sensitivity to $\delta$ can be found in Table~\ref{tab:on_delta},
    where LGES is not very sensitive to small change of $\delta$. 
    A more detailed discussion about $\delta$ can be found in \cref{sec:appx_implementation}.
    \looseness=-1

\begin{table}[t]
\vspace{-1em}
\begin{minipage}[c]{0.48\linewidth}
\centering
\includegraphics[width=\linewidth]{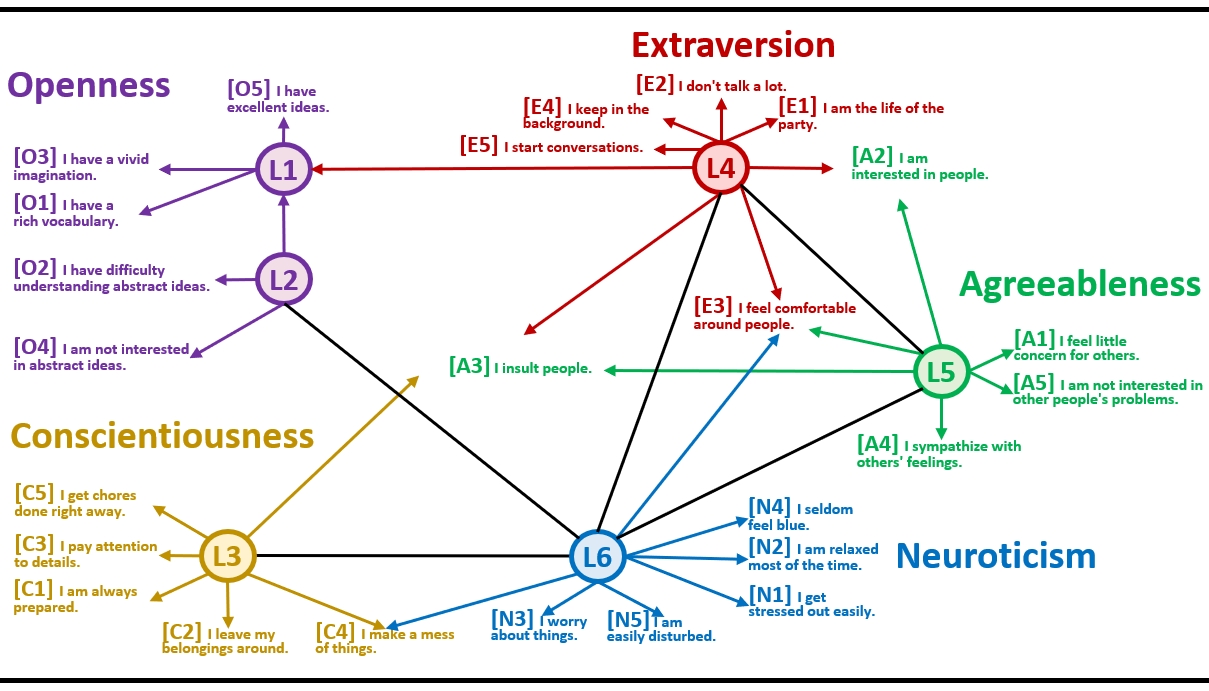}
\vspace{-2em}
\captionof{figure}{\small Causal structure (CPDAG) recovered by LGES on Big Five personality dataset.
    }
\label{fig:bigfive}
\end{minipage}
\hspace{0.5em}
\begin{minipage}[c]{0.4875\linewidth}
\centering
\vspace{0em}
\includegraphics[width=\linewidth]{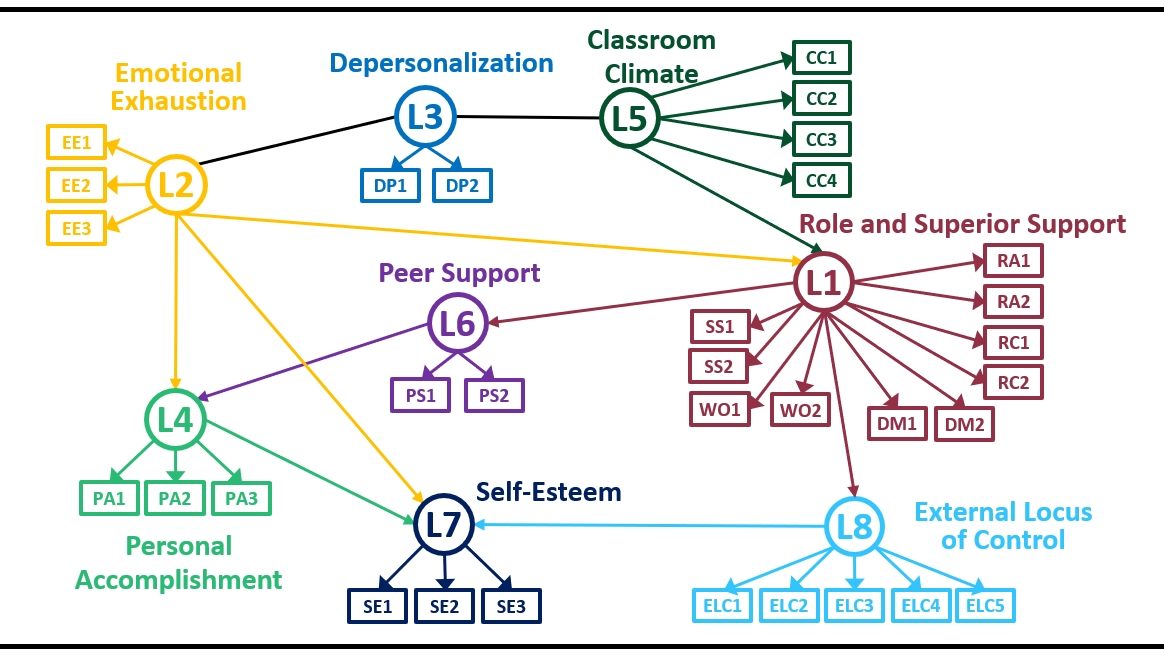}
\vspace{-2em}
 \captionof{figure}{\small Causal structure (CPDAG) recovered by LGES on Teacher burnout dataset. }
 \label{fig:burnout}
\end{minipage}\hfill
\vspace{-0.5em}
\end{table}

\vspace{-0.5em}
\subsection{Misspecification Behavior}
\vspace{-0.5em}
We 
investigate the performance of LGES 
 under model  misspecification:
violation of normality and linearity.
For the setting of violation of normality, we use uniform noise terms for the underlying model,
and report the result in \cref{tab:misspecification}.
Specifically, when the normality is violated, LGES  still performs the best compared to baselines
and the result is almost the same as that of the gaussian case. 
For example, with 1k sample size, the F1 score of LGES under non-gaussianity is 0.79 
while the counter part under the standard setting is 0.82.
%
This is not very unexpected:
the structure identifiability by score is built upon 
the hard constraints imposed by structure on the observational covariance matrix,
which only relies on the linearity of the underlying causal model and does not rely on gaussianity. 
%
As for violation of linearity, we employ  leaky ReLU \citep{xu2015empirical}
to simulate piecewise linear
function, 
as $\mathsf{V_{i}}=\text{LR}(\sum \nolimits_{\mathsf{V_{j}} \in \text{Pa}(\mathsf{V_{i}})} f_{ji} \mathsf{V_{j}} + \epsilon_{\mathsf{V_{i}}})$,
 $\text{LR}(x)=\max(\alpha x,x)$, where $\alpha=0.8$.
The result is in \cref{tab:misspecification}, which shows that LGES  works reasonably well under
certain extent of non-linearity, and still surpasses all baselines. For example, with sample size 1k,
 LGES still achieves 0.71 F1 score even under nonlinearity, while the runner-up achieves 0.65.
 \looseness=-1

\vspace{-0.5em}
\subsection{Real-World Performance}
\vspace{-0.5em}
We consider three real-life datasets.
%
Big Five personality dataset (\url{openpsychometrics.org}),
which consists of 50 questions with 19,719 datapoints. There are five dimensions: Openness, Conscientiousness, 
Extraversion, Agreeableness, and Neuroticism (O-C-E-A-N),
and each dimension with 10 questions (e.g., O1 is the first question for Openness). We use the first 5 questions for each dimension and in total 25 variables.
The structure by LGES is shown in \cref{fig:bigfive}.
Interestingly, without any prior knowledge,
the structure recovered by LGES is very aligned with psychology study.
To be specific, each item in our result is indeed caused by the supposed dimension (latent variable).
Further, we found that some items are caused not only by one latent variable.
For example, E3 (I feel comfortable around people) is caused by L4 (Extraversion), L5 (Agreeableness), and L6 (Neuroticism),
which may shed new light on the existing factor-analysis-based psychometric study.
\looseness=-1

Teacher burnout dataset \citep{byrne2001structural}.
 The term burnout refers to the inability to perform effectively
 in one's job  due to job-related
 stress.
 The dataset includes 32 observed variables with 599 datapoints.
 Multi-tasking behavior dataset
 \citep{himi2019multitasking}. To compare with the model proposed by  \citet{himi2019multitasking},
 we use 9 variables of it with all its 202 datapoints.
 The structures produced by LGES for teacher burnout data and multi-tasking data are shown in \cref{fig:burnout,fig:multitasking} respectively.
%
 Finally, we use three prevalent goodness of fit statistics to validate the structure produced by LGES:
 RMSEA \citep{steiger1980statistically,steiger1990structural}, 
 CFI \citep{bentler1990comparative}, 
 and TLI \citep{tucker1973reliability,bentler1980significance} (detailed in \cref{{sec:appx_goodfit}}).
 We used these fit indices to compare the structure produced by our method with the well-known hypothesized structures in existing psychological studies.
 The result is reported in 
\cref{tab:fitness}, where  the structures by LGES achieve the best scores compared to all the structures proposed in psychological research,
shows that the structures by LGES explain the observational data better than existing study,
and validates the proposed method in real-life scenarios.
\looseness=-1

{ 
\vspace{-0.5em}
\subsection{Comparison With The Exact Search}
\vspace{-0.5em}

In this section we compare the proposed LGES to the score-based exact search method \citep{ngscore}.
The exact search has to enumerate all the possible candidate graphs and thus 
the complexity 
grows super-exponentially with the increase of the number of observed variables $|\set{X}_\graph|$ (which is why we do not include the exact search in our main result in \cref{tab:result}).
%
In contrast to exact search, LGES search over the candidate graph space
in a properly designed way such that 
each time we only need to decide greedily while the asymptotic correctness can still be guaranteed. 

The result is shown in \cref{tab:compare_to_exact_search}.
Specifically, 
to handle a single dataset with 8 observed variables, the exact search requires more than 100 hours while LGES only requires 17 seconds. When it comes to 9 observed variables,
the exact search requires more than 1000 hours (by estimation) while LGES only requires 19 seconds. 
As for the causal discovery performance measured by F1 and SHD, the exact search only performs slightly better than LGES. For example, 
for 7 observed variables, the F1 score of the exact search is 0.89
while the F1 of LGES is 0.88; yet, the the exact search requires nearly 400 times more time (1.5 hours v.s. 14 seconds). This empirical result demonstrates the significant computation efficiency improvement of LGES 
against the exact search, with only slight degradation in the performance.
\looseness=-1
}

\vspace{-0.5em}
\subsection{Implementation Details, Runtime Analysis, and Extendability}
\vspace{-0.5em}
{ 
The readers are referred to \cref{sec:appx_implementation} and \cref{sec:runtime} 
for implementation details
and 
runtime analysis, respectively.
As for the discussion about  the extendability of the proposed method to non-Gaussian or non-linear scenarios, please refer to  \cref{sec:extend_to_nongaussian_nonlinear}. 
}


\vspace{-1em}
\section{Conclusion}
\vspace{-0.5em}
{
In this paper,
we first characterize how likelihood score and minimal dimension are related to the structure identifiability of partially observed linear causal models. Then we  propose Generalized N
Factor Model under which we prove the global consistency of using score for structure identification.
Finally we propose LGES, an asymptotically correct score-based greedy search method
to efficiently search over the graph space and identify the causal structure.}

\section*{Acknowledgments}
We would also like to acknowledge the support from NSF Award No.~2229881, AI Institute for Societal Decision Making (AI-SDM), the National Institutes of Health (NIH) under Contract R01HL159805, and grants from Quris AI, Florin Court Capital, MBZUAI-WIS Joint Program, and the Al Deira Causal Education project.

\bibliography{iclr2026_conference}
\bibliographystyle{iclr2026_conference}

\newpage
\appendix

\begin{algorithm}[tb]
    \caption{Phase 1 of LGES}
    \label{alg:lges_phase1}
   \begin{algorithmic}[1]
    \STATE {\bfseries Input:~} Empirical covariance $\empX$ over $\set{X}$, tolerance $\delta$\;
  \STATE {\bfseries Output:~}{CPDAG $\state_{\text{phase1}}$}
  \STATE Let current state $\state=\state_{\text{init}}$ by Def.~\ref{def:init_state} with latent variables $\set{L}$, 
  let $k=1$, $\set{X}_t=\set{X}$,  
  $\set{L}_d=\emptyset$, and $P$ be an empty list;\;
  \WHILE{$k\leq \frac{|\set{X}_t|}{2}$}
  {
    \FOR{each size $k$ subset $\set{L}_i\subseteq\set{L}_d$ (in parallel)}
    \FOR{each $\node{X}_j\in\set{X}_t$ (in parallel)}
        \STATE $\state_{ij}=\operator_{\set{L}\set{X}}(\state,\set{L}\setminus \set{L}_i,\{\node{X}_j\})$;\;
        \STATE Calculate $\text{score}_{\text{ML}}(\state_{ij},\empX)$;\; 
    \ENDFOR
    \ENDFOR
    \STATE Let $\text{score}_{\text{ML}}(\state,\empX)=s$;\;
    \FOR{each $i,j$ s.t., $|\text{score}_{\text{ML}}(\state_{ij},\empX)-s|\leq \delta$}
        \STATE $\state=\operator_{\set{L}\set{X}}(\state,\set{L}\setminus \set{L}_i,\{\node{X}_j\})$ and $\set{X}_t=\set{X}_t\setminus\{\node{X}_j\}$
    \ENDFOR
    \STATE Let $\set{L}'$ be any size $k$ subset of $\set{L}\setminus\set{L}_d$;\;
    \FOR{each size $k+1$ subset $\set{X}_j\subseteq\set{X}_t$ (in parallel)}
        \STATE $\state_{j}=\operator_{\set{L}\set{X}}(\state,\set{L}\setminus \set{L}',\set{X}_j)$
        \STATE Calculate $\text{score}_{\text{ML}}(\state_{j},\empX)$;\; 
    \ENDFOR

    \STATE 
    Maintain a disjoint set $\setset{X}$ for elements in $\set{X}_j, j\in \{j: |\text{score}_{\text{ML}}(\state_{j},\empX)-\text{score}_{\text{ML}}(\state,\empX)|\leq \delta\}$;\;
    \FOR{each $\set{X}'\in \setset{X}$}
        \STATE Let $\set{L}'$ be any size $k$ subset of $\set{L}\setminus\set{L}_d$;\;
        \STATE $\state=\operator_{\set{L}\set{X}}(\state,\set{L}\setminus \set{L}',\set{X}')$;\;
        \STATE $\set{L}_d=\set{L}_d\cup\set{L}'$, $\set{X}_t=\set{X}_t\setminus\set{X}'$, append $\set{L}'$ to $P$;\;

    \ENDFOR
    \STATE k=k+1;\;
  }
  \ENDWHILE
    \STATE Remove $\set{L}\setminus\set{L}_d$ from $\state$ and let $\state_{\text{phase1}}=\state$;\;
    \STATE {\bfseries return~}{$\state_{\text{phase1}}$, $P$ (which records those latent variables that really exist for the use of Phase 2)}
   \end{algorithmic}
   \end{algorithm}

\begin{algorithm}[tb]
    \caption{Phase 2 of LGES}
    \label{alg:lges_phase2}
   \begin{algorithmic}[1]
    \STATE {\bfseries Input:~} Empirical covariance $\empX$ over $\set{X}$, tolerance $\delta$, the output $\state_{\text{phase1}}$ and $P$ of \cref{alg:lges_phase1}\;
  \STATE {\bfseries Output:~}{CPDAG $\state_{\text{final}}$}
  \STATE Let $\state=\state_{\text{phase1}}$;\;
  \WHILE{True}
  {
    \FOR{$\set{L}_i\in P$ (in parallel)}
    \FOR{$\set{L}_j\in P$ (in parallel)}
        \IF {$\set{L}_i-\set{L}_j$ or $\set{L}_i\rightarrow\set{L}_j$} 
        \STATE Let $\set{H}=\{\node{L}: \node{L}-\set{L}_j \text{~and~} 
        (\node{L}-\set{L}_i \text{~or~} \node{L}\rightarrow\set{L}_i \text{~or~}  \node{L}\leftarrow\set{L}_i)\}$;\;
        \FOR{each subset $\set{H}'_k\subseteq\set{H}$ (in parallel)}
            \STATE $\state_{ijk}=\operator_{\set{L}\set{L}}(S,\set{L}_i, \set{L}_j, \set{H}'_k)$;\;
            \STATE Calculate $\text{score}_{\text{ML}}(\state_{ijk},\empX)$;\; 
        \ENDFOR
        \ENDIF
    \ENDFOR
    \ENDFOR
    \STATE Let $i^*,j^*,k^* = \arg\max \text{score}_{\text{ML}}(\state_{ijk},\empX)$;\;
    \STATE {\bfseries if} {$|\text{score}_{\text{ML}}(\state_{i^*j^*k^*},\empX)-\text{score}_{\text{ML}}(\state,\empX)|\leq\delta$} 
    {\bfseries then} $\state=\state_{i^*j^*k^*}$;\;
    {\bfseries else} Break;\;
  }
  \ENDWHILE
  \FOR{$\set{L}_i\in P$}
    \STATE Delete edges among $\set{L}_i$ in $\state$;\;
  \ENDFOR
    \STATE $\state_{\text{final}}=\state$;\;
    \STATE {\bfseries return~}{$\state_{\text{final}}$}
   \end{algorithmic}
   \end{algorithm}


\section{Detailed Discussion about Related Work}
\label{sec:relatedwork}

In this section, we provide a more comprehensive review of related work based on those discussed in~\cref{sec:intro}, expanding on both latent variable causal discovery and score-based causal discovery.

\paragraph{Latent variable causal discovery:}  
The earliest attempts for handling latent variables in causal discovery were the Fast Causal Inference (FCI) algorithm \citep{spirtes2001causation, richardson2002ancestral, zhang2008completeness}.
{More recently, LCD \citep{rohekar2021iterative}, an iterative causal discovery method, was proposed for structure identification in
the possible presence of latent confounders and selection bias. However, this line of work aims at identifying the Maximal Ancestral Graph. In other words,} the objective is more of ``deconfounding'' for the identification of causal relations among observed variables, and does not provide direct insight into the causal structure among latent variables. Moreover, FCI is already proved to be maximally informative under nonparametric conditional independence constraints. Therefore, to go beyond the limitations of conditional independence constraints, new statistical tools have been developed, often relying on additional parametric assumptions.  

These tools have been discussed in~\cref{sec:intro}. Among them, the most commonly used one and also the earliest developed one might be rank constraints \citep{ sullivant2010trek}, which generalize the classical Tetrad representation theorem~\citep{spirtes2001causation} and basic conditional independence constraints. Based on these new tools, many algorithms have also been developed~\citep{silva2003learning,huang2022latent,dong2023versatile}, with tests that handle discretizations \citep{dong2025permutation,sun2025sample}. However, despite their theoretical advancements, most existing methods remain within the constraint-based paradigm, heavily relying on statistical tests that suffer from multiple-testing and error propagation issues. This is exactly our motivation on developing score-based algorithms for latent variable causal discovery.

\paragraph{Score-Based causal discovery:} Score-based methods offer an alternative to constraint-based approaches, mitigating some of the issues related to error propagation. These methods search for an optimal structure by maximizing a scoring function, such as the Bayesian Information Criterion (BIC) or likelihood-based scores. Based on their search strategies, they can be broadly categorized as follows:

\begin{itemize}[leftmargin=*, itemsep=0pt]
    \item Exact Search Methods employ exhaustive graph traversal techniques or exploit minimal pruning strategies, such as permutation search~\citep{raskutti2014learning}, dynamic programming~\citep{koivisto2004exact}, or integer linear programming \citep{, cussens2012bayesian}, to identify the globally optimal structure. These methods require minimal assumptions about the underlying graph structure or parametric model. However, their computational complexity grows super exponentially with the number of variables, rendering them impractical for large-scale causal discovery. 
    \item A*-based and heuristic search methods integrate heuristic functions into the search process to guide exploration through the graph space \citep{yuan2013learning,scanagatta2015learning}. These methods strike a balance between computational efficiency and search completeness by prioritizing graph structures with high potential scores while avoiding exhaustive enumeration. Although more scalable than exact search, the quality of the learned structure relies heavily on the effectiveness of the heuristic function.  
    \item Greedy search methods, such as the widely used Greedy Equivalence Search (GES) \citep{chickering2002optimal}, formulate the search problem in terms of graphical operators that iteratively modify the structure by adding, deleting, or reversing edges. These methods are computationally efficient and well-suited for large-scale problems. As illustrated in \citep{nandy2018high}, greedy search methods can often be interpreted as progressively refining the graph structure based on conditional independence or other graphical constraints, offering an intuitive connection between constraint-based and score-based paradigms.
    \item Differentiable approaches, such as the seminal NOTEARS method \citep{zheng2018notears}, recast the structure learning problem as a continuous optimization task. Subsequent works have incorporated nonlinear functional forms~\citep{yu19daggnn,lachapelle2020grandag,zheng2020learning,ng2022masked}, interventional data~\citep{brouillard2020differentiable,faria2022differentiable,lippe2022enco}, alternative optimization techniques~\citep{ng2020role,ng2022convergence,bello2022dagma,deng2023optimizing}, and improved formulations of the acyclicity constraint~\citep{yu19daggnn,lee2019scaling,wei2020nofears,bello2022dagma,zhang2022truncated,zhang2025analytic}. These methods benefit from direct compatibility with well-established numerical solvers and GPU acceleration, enabling them to efficiently handle large-scale problems.
\end{itemize}

Lastly, let us note that in the intersection of latent variable causal discovery and score-based algorithms, the only existing approach, to our knowledge, is that of \citet{ngscore}. As for the specific search procedure, their method follows an inefficient exact search paradigm. In contrast, our work is the first to introduce greedy score-based search for latent causal discovery, offering a more practical and scalable solution to real-world problems while maintaining identifiability guarantees.
{We note that \cref{sec:algebraic_equivalence} is highly related to \citet{ngscore},
and yet our contribution in \cref{sec:algebraic_equivalence} is unique,
with reasons as follows.

First,
to establish structure identifiability based on likelihood score and dimension,
we have to assume the generalized faithfulness (the spirit of which is similar to the classical CI faithfulness). To assume the generalized faithfulness, one has to formally define several necessary notions including equality constraints, 
$H(\graph)$ (the set of equality constraints for a graph $\graph$),
$B(\graph)$ (the set of canonical  equality constraints with respect to reduced Grobner basis), and $\mathbb{H}^n$ (the union of $B(\graph)$ over all possible graphs with $n$ observed variables). 
This paper provides formal definitions of these crucial notions,
which 
serves as the 
basis for a rigorous version of the generalized faithfulness \cref{assumption:generalized_faithfulness};
as a contrast, these notions are not rigorously defined in \citet{ngscore},
which gives rise to some counter examples that shows the violation set of 
the generalized faithfulness might not be of measure zero.
%

Second, the identifiability theory proposed in \citet{ngscore} is based on $\text{score}_{\text{dim}}$, where the likelihood and dimension are entangled.
This actually requires a score based method (either exact search or greedy search)
to be able to explicitly characterize the dimension of any candidate graph
during the search procedure.
However, characterizing an arbitrary latent causal structure still remains an open challenge in the field, which limits the usage of the theory.
In contrast, this work explicitly disentangled the maximum likelihood score and dimension when stating the identifiability theory in \cref{thm:equivalence_equality_constraints}. This gives rise to our greedy search method that does not need to explicitly characterize the dimension of each candidate graph during the search.
}

{This work is also closely related to the RLCD algorithm \citep{dong2023versatile}.
Specifically, both RLCD and LGES aims to identify the underlying causal structure involving latent variables given observational distribution.
The RLCD algorithm, which is a constraint based method, makes use of rank constraints.
Note that the set of all rank constraints is a super-set of the vanishing partial correlation constraints and a  sub-set of all the equality constraints, and RLCD examines the rank constraints by using statistical rank tests. In contrast, LGES is a score based method, which does not rely on statistical tests. In essence, by comparing the likelihood scores and dimensions, LGES implicitly makes use of all the equality constraints, which is a super-set of the rank constraints.
}

The  research on nested Markov model 
\cite{shpitser2012parameter,shpitser2014introduction,richardson2023nested} is related to causal discovery in the presence of latent variables.
This line of work is elegant in that it accomodates Verma-type constraints and contains marginal distributions given by a DAG model with latent variables. However, nested Markov models follow the acyclic directed mixed graph (ADMG) framework, where the effect of latent variables are simplified into bidirected edges between observed variables. That is to say, within ADMG, only structure among observed variables is concerned. On the contrary, this paper aims to identify the whole underlying causal structure among both observed and latent variables (e.g., an edge from a latent variable to another latent variable). The research on learning
phylogenetic tree \cite{felsenstein2004inferring,huelsenbeck2001bayesian} is also related as it aims to infer the evolutionary relationships among a group of organisms using observed data—typically and outputs a tree-structured graph. Yet, the graphical assumption in this line of work is much stronger then the GNFM considered in this paper.

\begin{center}
  \begin{table*}[tb]
  \vspace{-1em}
    \caption{{ Comparison of LGES with exact search across different graph sizes with 1000 sample size.
      - means the result is unavailable due to that the complexity of exact search grows super-exponentially.}}
     \vspace{-0.5em}
     \label{tab:compare_to_exact_search}
    \footnotesize
    \center 
  \begin{center}
  \resizebox{\linewidth}{!}{
  { 
  \begin{tabular}{|c|c|c|c|c|c|c|c|}
    \hline  \multicolumn{1}{|c|}{} &\multicolumn{2}{c|}{{F1 score for skeleton $\uparrow$} }&\multicolumn{2}{c|}{{SHD for MEC $\downarrow$} }&\multicolumn{2}{c|}{{Time to find a single graph $\downarrow$}}\\
    \hline 
    \multicolumn{1}{|c|}{Graph Size}  & {LGES} & {Exact Search} & {LGES} & {Exact Search} & {LGES} & Exact Search\\
    \hline 
    $|\set{X}_{\graph}|=5$ & \textbf{0.96} (0.06) & \textbf{0.96} (0.03) & {0.55} (0.43)&\textbf{0.37} (0.05) &\textbf{9 seconds} & 15 seconds  \\
    \hline 
    $|\set{X}_{\graph}|=6$ & \textbf{0.91} (0.02) & \textbf{0.91} (0.02) & 1.24 (0.75) &  \textbf{0.98} (0.54) &\textbf{11 seconds} & 8 minutes  \\
    \hline
    $|\set{X}_{\graph}|=7$ & {0.88} (0.03) & \textbf{0.89} (0.02) & 2.20 (0.87)& \textbf{2.05} (0.80) &\textbf{14 seconds} & 1.5 hours  \\
    \hline
    $|\set{X}_{\graph}|=8$ & {0.86} (0.02) & - & 3.56 (0.78)& - & \textbf{17 seconds} & $>$100 hours \\
    \hline
    $|\set{X}_{\graph}|=9$ &  {0.85} (0.03) & - &  4.11(1.52)& - & \textbf{19 seconds} & $>$1000 hours \\
    \hline
  \end{tabular}
  }
  }
  \end{center}
  \vspace{-0.5em}
  \end{table*}
  \end{center}

\begin{table}[t]
\hspace{0em}
\begin{minipage}[c]{0.5\linewidth}
\centering
     \vspace{-0em}
    \caption{Ablation study on the sensitivity of hyper-parameter $\delta$.}
    \label{tab:on_delta}
    \vspace{0em}
   \resizebox{\linewidth}{!}{
  \begin{tabular}{|c|c|c|}
    \hline  \multicolumn{1}{|c|}{value of $\delta$} &\multicolumn{1}{c|}{{F1 score for skeleton $\uparrow$} }&\multicolumn{1}{c|}{{SHD for MEC $\downarrow$} }\\
    \hline 
    $5\times10^{-4}$ & {0.79}(0.02) & 10.85(0.79)\\
    \hline 
    $1\times10^{-3}$ & {0.81}(0.02) & 9.52(0.84)\\
    \hline
    $2\times10^{-3}$ & \textbf{0.82}(0.02) & \textbf{8.80}(0.70)\\
        \hline 
    $4\times10^{-3}$ & {0.80}(0.02) & 9.85(0.59)\\
        \hline 
    $1\times10^{-2}$ & {0.78}(0.02) & 11.13(0.70)\\
        \hline 
  \end{tabular}
  }
  \end{minipage}
\end{table}

\section{Proofs}
\subsection{Proof of \cref{thm:equivalence_equality_constraints}}
\TheoremEquivalenceEqualityConstraints*

The overall proof strategy below is inspired by \citet{ghassami2020characterizing,ngscore}.
\begin{proof}
In the large sample limie, we have $\Sigma_{\set{X}}^*=\empX$.
By $\hat{\graph}\in\mathbb{G}^*$,
        we have that 
        $\hat{\graph}$ can generate $\Sigma_{\set{X}}^*$
        and thus 
  $\Sigma_{\set{X}}^*$ contains all the equality and inequality constraints of $\hat{\mathcal{G}}$. Under the generalized faithfulness assumption, we have 
\begin{equation}\label{eq:equalities_subset}
H(\hat{\mathcal{G}})\subseteq H(\mathcal{G}^*).
\end{equation}
      Further, we have $\hat{\graph}\in \argmin_{\graph\in\mathbb{G}^*} \dim(\graph)$.
      As $\graph^*\in\mathbb{G}^*$,
      we have $\dim(\hat{\graph})\leq\dim(\graph^*)$.
      Suppose by contradiction that $H(\hat{\mathcal{G}})\subsetneq H(\mathcal{G}^*)$. This implies $\dim(\hat{\mathcal{G}})>\dim(\mathcal{G}^*)$, which contradicts with $\dim(\hat{\graph})\leq\dim(\graph^*)$. Therefore, we have 
\begin{equation}\label{eq:equalities_not_subset}
H(\hat{\mathcal{G}})\not\subsetneq H(\mathcal{G}^*).
\end{equation}
Taking \cref{eq:equalities_subset,eq:equalities_not_subset} together, we have $H(\hat{\mathcal{G}})= H(\mathcal{G}^*)$.  
\end{proof}

\subsection{Proof of \cref{thm:identify_gnfm}}
\TheoremIdentifiabilityGNFM*

\begin{proof}
We prove by showing that, by using equality constraints we can identify a graph $\graph\in\mathbb{G}_{\text{GNFM}}$ up to MEC.

  Suppose $\graph$ satisfies \cref{definition:gnfm}
   and thus 
there exists a partition of all latent variables in $\graph$ that satisfies the requirement in \cref{definition:gnfm}, as $\{\set{L}_i\}_1^P$.
For each $\set{L}_i$, there exist at least $|\set{L}_{i}|*2$ observed variables $\set{X}_{i}$
such that for all $\node{X}\in\set{X}_{i},
\parents(\node{X})=\set{L}_{i}$.
We first prove by induction that the structure from latent variables to observed variables can be identified up to MEC by equality constraints.
Let $k=1$. For those $|\set{L}_i|=k$,
all the pure children of $\set{L}_i$ can be identified by rank constraints:
We can simply check size $k+1$ combination of observed variables $\hat{\set{X}}$ and we have the variables in $\hat{\set{X}}$ are pure children of the same size 1 latent group, iff $\text{rank}_{\Sigma_{\hat{\set{X}},
\set{X}_\graph\setminus\hat{\set{X}}}}=k$, given the relation between rank and t-separation in \citet{sullivant2010trek}.
Next, suppose when
the pure children of all the latent groups with $|\set{L}_i|\leq k$
 have been found, 
 we show the pure children of  latent groups with $|\set{L}_i|=k+1$ can also be found by rank constraint.
In this case,  check all size $k+2$ combination of observed variables $\hat{\set{X}}$ such that $\text{rank}_{\Sigma_{\hat{\set{X}},
\set{X}_\graph\setminus\hat{\set{X}}}}=k+1$. Consider all such  $\{\hat{\set{X}}\}_i^c$ and maintain a disjoint set for them such that
 $\hat{\set{X}}_i$ and  $\hat{\set{X}}_j$ belong to the same group if they have at least one common element.
Take the union of such a group, say $\tilde{\set{X}}$.
If $\tilde{\set{X}}$ share no common element with the pure children of any $\set{L}_i$ that has size$\leq k$, then elements in $\tilde{\set{X}}$ are pure children of the same latent group $\set{L}_j$ with $|\set{L}_j|=k+1$.
By induction, all the pure children of each latent group can be identified.

Further, for an observed variable, say, $\node{X}$, that is a common child of multiple latent groups, suppose its common parents are $\setset{L}$,
which is a set of some groups of latent variables.
Let 
 $\set{A}=\node{X}\cup\bigcup_{\set{L}_j\in\setset{L}}\{\purechildren^1(\set{L}_j)\}$, where $\purechildren^1(\set{L}_j)$
  is a set of $|\set{L}_j|$ pure children of $\set{L}_j$,
 we have $\text{rank}_{\Sigma_{\set{A},
\set{X}_\graph\setminus\set{A}}}=\sum_{\set{L}_j\in\setset{L}}|\set{L}_j|$, 
and that when any edge related to $\node{X}$ is changed, the related observed rank constraint will change,
and thus the set of equality constraints will also change.
Therefore, we have that if  $H(\graph_1)=H(\graph_2)$, then the structure from latent to observed variables in $\graph_1$ and that in $\graph_2$ are the same.

Next, we show that the structure among latent groups 
can be identified up to MEC by equality constraints.
   By making use of Corollary 1.3 in \cite{di2009t},
   we can translate d-separation between latent groups into t-separation among the pure children of these latent groups.
   Specifically,  for $\set{L}_i,\set{L}_j, i\neq j$,
   we have $\set{L}_i,\set{L}_j$,
    are d-separated by $\setset{L} \subseteq \{\set{L}_l\}_1^P \backslash \{\set{L}_i,\set{L}_j\}$,
    iff $\set{A}=\purechildren^1(\set{L}_i)\cup\bigcup_{\set{L}_l\in\setset{L}}\{\purechildren^1(\set{L}_l)\}$ and
    $\set{B}=\purechildren^1(\set{L}_j)\cup\bigcup_{\set{L}_l\in\setset{L}}\{\purechildren^2(\set{L}_l)\}$
    are t-separated by $\{\setset{L},\emptyset\}$ or $\{\emptyset,\setset{L}\}$,
    where $\purechildren^1(\set{L}_l)$ and $\purechildren^2(\set{L}_l)$ refer to two disjoint groups
     of $|\set{L}_l|$ pure children of $\set{L}_l$ (by definition we know such two groups must exist).
    Given \cref{assumption:generalized_faithfulness} and the relation between rank and t-separation in \citet{sullivant2010trek}, we have that
    $\set{L}_i$ and $\set{L}_j$ are d-separated by $\setset{L}$,
    iff $\rank_{\Sigma_{\set{A}, \set{B}}}=||\setset{L}||$,
    where $||\setset{L}||=|\bigcup_{\set{L}_l\in\setset{L}}\set{L}_l|$.
    This means that the d-separations among latent groups can be inferred from rank constraints on observed variables,
    and d-separations can be used to identified the structure among latent variables up to MEC.
    As rank constraints are part of the equality constraints, 
    we have that 
    the structure among latent variables
    can be identified up to MEC by equality constraints.
    Taking that the structure from latent to observed variables can also be identified up to MEC by equality constraints (proved before)
    and in Generalized N factor models there is no direct edge between observed variables,
    we have 
    that if $H(\graph_1)=H(\graph_2)$, then $\graph_1$ and $\graph_2$ belong to the same MEC.
\end{proof}


\subsection{Proof of \cref{corollary:global_consistency}}
\CorollaryGlobalConsistency*
\begin{proof}
  Similar to the proof of \cref{thm:equivalence_equality_constraints},
  we can show that $H(\hat{\graph})=H(\graph^*)$.
  By \cref{thm:identify_gnfm}, we have $\hat{\graph}$
  and $\graph^*$ belong to  the same MEC.
\end{proof}

\subsection{Proof of \cref{lemma:init_can_generate}}
\LemmaInitCanGenerate*
\begin{proof}
By the graphical assumption in \cref{definition:gnfm},
suppose $\set{L}$ is the set of all latent variables in $\graph^*$,
then there must exist at least $|\set{L}|\times 2$ observed variables in $\graph^*$.
As such, 
the number of latent variables $|\set{L}|$ should be no more than  $\lfloor \frac{|\set{X}|}{2}\rfloor$. 
Thus, by \cref{def:init_state}, $\state_{\text{init}}$ must be a supergraph of the CPDAG of the ground truth, 
$\state(\graph^*)$, up to permutation of latent variables,
and thus
$\state_{\text{init}}$ must be able to generate the observation
equally well as  $\state(\graph^*)$.
\end{proof}

\subsection{Proof of \cref{lemma:correctphase1}}
\LemmaCorrectnessPhaseone*

We provide a sketch of proof as follows.

\begin{proof}
We prove by induction.
First consider $k=1$. We show that 
all the observed variables that belong to a size 1 latent group can be identified.
Specifically,  $\operator_{\set{L}\set{X}}(\state,\set{L}\setminus \set{L}',\set{X}_j)$ in line 17 introduces a rank constraint $\text{rank}({\Sigma}_{\set{X}_j,\set{X}\setminus\set{X}_j})=1$ to $\state_j$,  if $\state_j$ in line 17 of \cref{alg:lges_phase1} can still generate the observation,
then under the generalized faithfulness in \cref{assumption:generalized_faithfulness}  this rank constraint must also belong to $H(\graph^*)$, which means that variables 
in $\set{X}_j$ must belong to the same size $1$ latent group in $\graph^*$.
As such, all observed variables that belong to the same size $1$ latent group can be identified.
Now, suppose we have already identified those observed variables whose parent set has cardinality $t$ and 
let the set of these observed variables as $\set{X}_{\text{done}}^t$, for all $t\leq T$.
We show that for $k=T+1$, 
all the observed variables that belong to a size $k$ latent group can be identified.
Specifically, $\state_j$ introduces a rank constraint $\text{rank}({\Sigma}_{\set{X}_j,\set{X}\setminus\set{X}_j})=k$ to $\state_j$,  if $\state_j$ in line 17 of \cref{alg:lges_phase1} can still generate the observation,
then under the generalized faithfulness in \cref{assumption:generalized_faithfulness} this rank constraint must also belong to $H(\graph^*)$.
Given that  $\set{X}_j$ has no common variable with any $\set{X}_{\text{done}}^t$ for all $t\leq T$,
we have that $\set{X}_j$ must belong to the same size $k$ latent group in $\graph^*$.
Therefore, by the end of the while in line 27 in \cref{alg:lges_phase1},
all the parents of each observed variable can be identified.
Till now,  in the state considered in the algorithm, there might still exist
some latent variables having no observed variable as children.
These latent variables are removed from the current state in line 28 in \cref{alg:lges_phase1},
and thus the number of latent variables in $\state_{\text{phase1}}$ is the same as that of the ground truth and the structure between latent variables and observed variables in $\state_{\text{phase1}}$ is also the same as that of the ground truth up to permutation of latent variables.
\end{proof}

\subsection{Proof of \cref{theorem:correctphase2}}
\TheoremCorrectnessPhaseTwo*
The proof is partially inspired by the proof of Lemma 10 in \citet{chickering2002optimal}.
\begin{proof}
First, we know that all the states must be able to generate the observation.
Assume that Phase 2 terminates with a sub-optimal state $\state'$
and let $\graph'$ be a DAG that belongs to $\state'$.
By Theorem 4 in \citet{chickering2002optimal}
we know that there must exist a sequence of covered edge reversals and edge additioins that transforms $\graph^*$ to $\graph'$. Suppose $\graph''$ precedes the last edge addition in the sequence. We have that $\graph''$ must also be able to generate the observation and $\state(\graph'')$ is a neighboring state of $\state'$, 
which means Phase 2 should not terminate at $\state'$, yielding a contradiction.
Thus, by the end of Phase 2, the structure among latent variables in $\state_{\text{final}}$ must be the same as that of the ground truth
up to permutation of latent variables.
Taking \cref{lemma:correctphase1} also into consideration, we have that 
  in the large sample limit the output $\state_{\text{final}}$ of \cref{alg:lges_phase1,alg:lges_phase2} is a CPDAG 
    that represent the MEC of $\graph^*$,
    up to permutation of latent variables.  
\end{proof}

\section{Additional Definitions, Implementation Details, Runtime Analysis, and Examples}

\subsection{{Detailed Definition of Equality Constraints, Inequality Constraints, $H(\graph)$, and  $\mathbb{H}^n$.}}
\label{sec:def_HG}

\begin{definition}[{Definition of Equality Constraints,
Inequality Constraints,
and $H(\graph)$.}]
  \label{def:HG}
  Let $\mathcal{G}$ be the DAG structure of a Partially Observed Linear Causal Model 
  with $m$ latent variables and $n$ observed variables (as in \cref{definition:polcm}).
The entries of the observed covariance matrix $\Sigma_{\mathbf{X}}$ are 
polynomial functions of model parameters $\theta = (F^T,\Omega_{\epsilon})$, 
where $F^T$ is the edge coefficient matrix and $\Omega_{\epsilon}$ is the diagonal noise variance matrix. This induces a parametric map under $\mathcal{G}$:
$$\phi_{\mathcal{G}}:\mathbb{R}^{|\theta|}\rightarrow\mathbb{R}^{n(n+1)/2},$$
from parameters to observed covariance matrix,
defining the system of equations: $$\{ {\Sigma_{\mathbf{X}}}_{ij}-\phi_{\mathcal{G},ij}(\theta)=0| 1\leq i \leq j \leq n\}.$$
{Each equality constraint is 
just a polynomial equality equation consists of entries of $\Sigma_{\set{X}}$ and $\theta$,
and each inequality constraint is just  a polynomial inequality equation consists of entries of $\Sigma_{\set{X}}$ and $\theta$.
As our objective is to find information from $\Sigma_{\set{X}}$ to infer the causal structure,
we should focus on  those  equality equations that  consists of only entries of $\Sigma_{\set{X}}$, which can be achieved as follows.
}

 Let $\mathbb{R}[\theta,\Sigma_{\mathbf{X}}]$ be the polynomial ring 
 which contains all 
variables for all model parameters $\theta$ and all distinct covariance entries ${\Sigma_{\mathbf{X}}}_{ij}$,
while $\mathbb{R}[\Sigma_{\mathbf{X}}]$ be the polynomial ring on $\Sigma_{\mathbf{X}}$ only.
Define the ideal $I_{\mathcal{G}}\subseteq
  \mathbb{R}[\theta,\Sigma_{\mathbf{X}}]$ generated by the above equations as:
  $$I_{\mathcal{G}}= \langle \{ {\Sigma_{\mathbf{X}}}_{ij}-\phi_{\mathcal{G},ij}(\theta)=0| 1\leq i \leq j \leq n\} \rangle.$$
Then, $H(\mathcal{G})$ is the elimination ideal obtained by intersecting $I_{\mathcal{G}}$ with $\mathbb{R}[\Sigma_{\mathbf{X}}]$, i.e.,
$$H(\mathcal{G}):=I_{\mathcal{G}}\cap\mathbb{R}[\Sigma_{\mathbf{X}}].$$

{
In other words, $H(\mathcal{G})$ contains all equality constraints implied by $\mathcal{G}$ on the observed covariance matrix.
}

\end{definition}

\begin{definition}[Definition of $B(\mathcal{G})$  and $\mathbb{H}^n$.]
  \label{def:BGHn}
Let $>$ be a fixed lexicographic monomial ordering on $\mathbb{R}[\theta, \Sigma_{\mathbf{X}}]$ such that all parameter variables in $\theta$ are greater than all covariance variables in $\Sigma_{\mathbf{X}}$.
Define $B(\mathcal{G})$ as follows.
 (i) Compute the reduced Gröbner basis of $I_{\mathcal{G}}$ following ordering $>$, i.e., $G_{B}(I_{\mathcal{G}},>).$
 (ii) 
 Retain only those polynomials that involve only variables in $\Sigma_{\mathbf{X}}$.
 Formally,
$$B(\mathcal{G}):=G_{B}(I_{\mathcal{G}},>)\cap\mathbb{R}[\Sigma_{\mathbf{X}}].$$
Then $\mathbb{H}^n$ is defined as $$\mathbb{H}^n:=\bigcup_{\mathcal{G}\in\mathbb{G}^n} B(\mathcal{G}).$$

\end{definition}

In essence, 
$H(\mathcal{G})$ contains all equality constraints implied by $\mathcal{G}$ on the observed covariance matrix,
 while $B(\mathcal{G})$ consists of a canonical and minimal (owing to reduced Gröbner basis)
 set of polynomial constraints among the observed covariances that must vanish for any distribution consistent with
  structure $\mathcal{G}$. Since the reduced Gröbner basis is unique (given a fixed monomial order), $B(\mathcal{G})$ serves as a standard representative of these constraints. The vanishing set of $B(\mathcal{G})$ defines the smallest algebraic variety that contains all observed covariance matrices generated by the model.

\subsection{Definition of One Factor Model by \citet{silva2003learning} and Comparison}
\label{sec:appx_one_factor_model}

\begin{definition}[One Factor Model \citep{silva2003learning}]
\label{definition:silva_graphical_criterion}
 DAG $\graph$ satisfies the definition of One Factor Model if
each measured variable has a single latent parent, and each latent variable has at least three measured variables as children.
\end{definition}

First, the generalized N factor model takes one factor model as a special case. An example of one factor model can be found in \cref{fig:compare_gnfm_ofm} (b). As a comparison, an example of generalized N factor model can be found in \cref{fig:compare_gnfm_ofm}
(a).
Specifically, (a) differs from (b) in that, (i) (a) allows latent variables to form a group and share observed variables as children, 
e.g., $\{\node{L}_4,\node{L}_5\}$ in (a) compared to $\node{L}_4$ in (b), (ii) (a) allows some observed variables to be common children of muiltiple groups of latent variables, e.g., $\node{X}_{14}$ in (a),
while (b) does not.

{
\subsection{Definition of V-Structure and CPDAG}
\label{sec:def_vstructure_cpdag}

\begin{definition}[V-Structure / Collider]
\label{def:vstructure}
Let $\mathcal{G} = (\set{V}, \set{E})$ be a directed graph, where $\set{V}$ is a set of vertices (nodes) and $\set{E}$ is a set of directed edges.
A v-structure is an ordered triplet of distinct nodes $(\node{X}, \node{Y}, \node{Z})$ where $\node{X}, \node{Y}, \node{Z} \in \set{V}$, that satisfies the following two conditions:
\begin{itemize}[leftmargin=*, itemsep=0pt]
    \item The graph $\mathcal{G}$ contains the directed edges $\node{X} \rightarrow \node{Y}$ and $\node{Z} \rightarrow \node{Y}$.
    \item $\node{X}$ and $\node{Z}$ are not adjacent in $\mathcal{G}$.
\end{itemize}
The node $\node{Y}$ is referred to as the collider node of the v-structure.
\end{definition}

\begin{definition}[CPDAG (also called  essential graph or maximally oriented graphs) \citep{spirtes2001causation,chickering2002optimal}]
\label{def:cpdag}
Let $\mathcal{E}$ be the Markov equivalence class of a DAG
$\mathcal{G}$.
The CPDAG that represents $\mathcal{E}$ is the unique graph 
$\mathcal{G}^*$ (consisting of both directed and undirected edges) 
such that:

\begin{itemize}[leftmargin=*, itemsep=0pt]
    \item \textbf{Directed Edge:} An edge $\node{X} \rightarrow \node{Y}$ is in $\mathcal{G}^*$ if and only if the edge $\node{X} \rightarrow \node{Y}$ exists in every DAG $\mathcal{G}_1 \in \mathcal{E}$.
    
    \item \textbf{Undirected Edge:} An edge $\node{X} - \node{Y}$ is in $\mathcal{G}^*$ if and only if there exists at least one DAG $\mathcal{G}_1 \in \mathcal{E}$ containing the edge $\node{X} \rightarrow \node{Y}$ and at least one other DAG $\mathcal{G}_2 \in \mathcal{E}$ containing the edge $\node{X} \leftarrow \node{Y}$.
\end{itemize}
\end{definition}
}

\subsection{Goodness-of-model-fit Measures}
\label{sec:appx_goodfit}
RMSEA \citep{steiger1980statistically,steiger1990structural} measures the discrepancy
due to the approximation per degree of freedom. It is actually a badness-of-fit measure and thus the lower value the better fit of the model. The sample RMSEA is estimated as follows.
\begin{align}
\label{eq:rmsea}
\hat{\text{RMSEA}}=\sqrt{\frac{\max(\chi^2-{df},0)}{df(N-1)}},
\end{align}
where $\chi^2$ is the chi-square statistic 
of the concerned model and
 $df$ is 
the degree of freedom of the chi-square statistic.

The CFI \citep{bentler1990comparative} measures the relative improvement in terms of fit from the baseline model to the proposed model. The sample CFI is estimated as follows:
\begin{align}
\label{eq:cfi}
\hat{\text{CFI}}=1-\frac{\max(\chi_k^2-{df}_k,0)}{\max(\chi_0^2-{df}_0,0)},
\end{align}
where 
$\chi^2_k$ and $df_k$ corresponds to the concerned model
while $\chi^2_0$ and $df_0$ corresponds to the baseline independent model that 
can only parameterize a diagonal covariance matrix.

The TLI \citep{tucker1973reliability,bentler1980significance} measures a relative reduction in misfit per degree of freedom. The sample estimator of TLI can be given as follows:
\begin{align}
\hat{\text{TLI}}=
\frac{\chi_0^2/{df}_0 - \chi_k^2/{df}_k }{\chi_0^2/{df}_0 -1}.
\end{align}

\begin{figure*}[t]
  \vspace{-0mm}
\setlength{\belowcaptionskip}{0mm}
  \centering
    \subfloat[An illustrative graph that satisfies generalized N factor model.]
    {
    \centering
    \includegraphics[width=0.45\textwidth]{figures/example_gnfm.png}
    }
    \centering
    \subfloat[An illustrative graph that satisfies one factor model.]
    {
    \centering
    \includegraphics[width=0.45\textwidth]{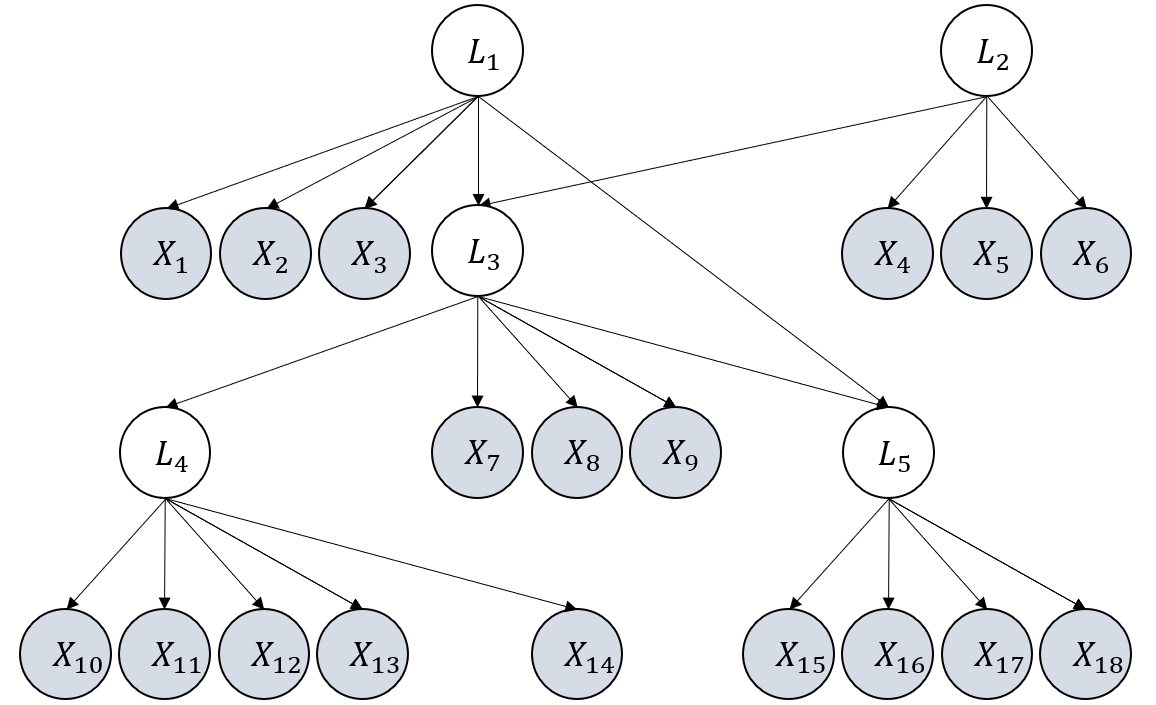}
    }
\caption{
Illustrative examples to compare two graphical assumptions, generalized N factor model v.s. one factor model.
 }
  \label{fig:compare_gnfm_ofm}
  \vspace{-1em}
\end{figure*}

\subsection{Implementation Details and Discussion on the design of $\delta$}
\label{sec:appx_implementation}
\vspace{-0.5em}
Our code is based on Python3.7 and PyTorch \citep{paszke2017automatic}
and the optimization problem in \cref{eq:ml} is solved by  
Adam~\citep{kingma2014adam} and LBFGS.
Data is standardized to have zero mean and unit variance.
The hyper parameter $\delta$ in \cref{alg:lges_phase1,alg:lges_phase2}
is set as $\delta=0.25\times\frac{\log(N)}{N}$, where $N$ is the sample size. This design follows the spirit of BIC score such that
$\delta\rightarrow0$ when $N\rightarrow\infty$. 
In practice,
we found that the result is only influenced marginally by a small change of $\delta$.

Bellow we provide a further discussion on  $\delta$ and the criterion of keeping edge removal in LGES.

Specifically, in LGES an edge removal is kept only when $|\text{LogL}\_{\text{curr}}-\text{LogL}\_{\text{prev}}|\leq k\frac{\log N}{N}$, where $\delta\coloneqq k\frac{\log N}{N}$ is the tolerance level, $N$ is the sample size, and $k$ is a hyper-parameter that controls sparsity.
This design follows the spirit of BIC score in GES ($\text{score}\_\text{BIC}=\text{LogL} - 0.5\frac{\log N}{N} \text{dim}$). Thus, the behavior of LGES is quite similar to GES - both of them accommodates asymptotic consistency and finite sample performance at the same time.

In the asymptotic case, $0.5\frac{\log N}{N} \text{dim}\rightarrow 0$, and thus the likelihood term dominates the BIC score. Therefore, what
GES favors is exactly the graph that (i) 
 has the best likelihood, (ii) at the premise of (i) the dimension should be as small as possible. 
Similarly, in the asymptotic case $\delta\rightarrow0$, and thus in LGES what is performed is precisely "Only when the likelihood after the deletion is still the best do we keep the deletion" in the asymptotic case. Combined with other designs as discussed in our last response, 
LGES also achieves the goal of finding the graph that (i) has the best likelihood, (ii) at the premise of (i) the dimension should be as small as possible, and thus guarantees the asymptotic consistency.

In the finite sample case, there exists a problem that a supermodel always has a better likelihood. To address this problem, in GES the BIC score sensibly encourages an edge removal by the term $0.5\frac{\log N}{N} \text{dim}$. As in GES each edge removal results in exactly 1 dimension decrease, the criterion in GES is equivalent to keeping an edge removal as long as $\text{LogL}\_{\text{curr}} \geq \text{LogL}\_{\text{prev}}  - 0.5\frac{\log N}{N}$. In LGES, an edge removal is kept when $ |\text{LogL}\_{\text{curr}}-\text{LogL}\_{\text{prev}}|\leq k\frac{\log N}{N}$, and thus it is also encouraged sensibly in LGES with finite samples. 

\subsection{Runtime Analysis}
\label{sec:runtime}
\vspace{-0.5em}

 Next we discuss the time complexity of LGES. Similar to the classical PC and GES, our method has a worst-case complexity exponential in the number of observed variables. However, if the underlying graph is sparse, which is a common and reasonable assumption \cite{kalisch2007estimating}, the complexity becomes polynomial.
 The intuition is as follows. 
 Similar to PC and GES, during the process, 
 LGES enumerates different combinations of variables and check the score to decide whether to
  delete some edges. Although the number of all combinations is exponential (also in GES and PC), 
  if the underlying graph is sparse, e.g., 
  maximum degree of a node is P, the algorithm will successfully find 
  the correct combination to delete the edge before enumerating all the combinations.
   Thus the number of combinations that are actually enumerated only depends on the constant
    P instead of number of variables. Therefore, the time complexity will become a term polynomial 
    in N, times a term polynomial in constant P, and thus polynomial in N.

 In our implementation, the computational cost is almost irrelevant to sample size, as we only need to calculate the sample covariance once and cache it. Further, lines 5,6,13 in Alg 1 and lines 5,6,9 in Alg 2 can be executed in parallel. Owe to these designs, in practice LGES is fairly efficient: on average it takes only one minute to handle a graph with 20 variables. 

We conduct all the experiments with single Intel(R) Xeon(R) CPU E5-2470.
Thanks to the fact that lines 5,6,13 in \cref{alg:lges_phase1}
and lines 5,6,9 in \cref{alg:lges_phase2} can be executed in parallel,
we employ the package joblib in python to conduct them by multi-processing.
As such,
on average it takes only one minute to handle a graph with 20 variables.
%
We note that the computational cost is almost irrelevant to sample size, as we only need to calculate the sample 
covariance matrix once and cache it for further use. Compared to RLCD and GIN, 
LGES is faster than RLCD but slower than GIN. 
Specifically, it takes RLCD around 2 minutes and GIN around 20 seconds to handle a graph with 20 variables,
while around one minute for LGES.
The reason why GIN is faster than LGES, 
is that GIN only focuses on the structure between latent variables and observed variables
 and does not identify structure among latent variables,
  while RLCD and LGES identify the whole underlying structure involving both observed and latent variables.

\subsection{Whether the Identifiability Theory Can be Extended to Non-Gaussian or Nonlinear Models?}
\label{sec:extend_to_nongaussian_nonlinear}
 The identifiability result can be extended to both linear non-Gaussian and certain kinds of nonlinear models.
(i) The key role of the score in our identifiability theory is to check whether the constraints on the observed covariance matrix is
 a subset of the constraints entailed by a candidate structure. For any structure, the constraints entailed by 
 in the non-Gaussian case is exactly the same as that of the Gaussian case.
  Therefore, the proposed identifiability theory still holds in the non-Gaussian scenario.
(ii) For certain kinds of nonlinearity, the proposed identifiability theory still works. For example, in Nonparanormal models where there exist smooth, monotonic transformations for each variable to transform variables to be jointly Gaussian, as the monotonic transformations can be identified up to trivial indeterminacy, we can still use the proposed score-based theory for structure identification.

{
\subsection{Further Discussion About  Error Propagation}
\label{sec:discussion_on_error_propagation}

Our score-based greedy search method is similar to a constraint-based method in the sense that both starts from a complete graph and deletes edges according to some criterion.  In this sense, why score-based methods are expected to suffer less from error propagation? 
The main reason lies in the difference of the used criteria. Score-based methods rely on MLE score and dimension, while constraint-based methods rely on statistical tests. We provide our further analysis as follows.

\textbf{Using Score as the Criterion Can Have Fewer Algorithm Steps.} The core theoretical basis of constraint-based methods and score-based methods are the same - checking whether the equality constraints on the observational distribution are aligned with the constraints entailed by a candidate graph.
Thus, roughly speaking, given a dataset the total number of constraints that need to be checked is the same for both score-based and constraint-based methods (if we want to guarantee asymptotic correctness).
However, constraint-based methods only examine one constraint in each step by a single test, while score-based methods can in essence examine multiple constraints together at the same time. More specifically, in score-based methods, we can introduce multiple constraints in a step and examine them together at the same time, by checking whether the likelihood is still maximal. 
As a consequence, score-based methods require fewer algorithm steps, which  mitigates the problem of error propagation. This is also aligned with our empirical observation that,
although the score-based GES and the constraint-based PC require basically  the same assumptions, 
in practice GES often has fewer steps to finish, and can often handle 10 times more variables than PC.

\textbf{Using Test as the Criterion Suffers More From Small Sample Sizes.}
Statistical tests (except for some permutation-based) reply on an approximation of the asymptotic null distribution of the test statistic to calculate the p-value for controlling the type-I error. When the sample size is small, e.g., N=100, the approximation (often based on the central limit theorem) could be far away from the true null distribution. As a consequence, with small sample size the type-I error in each step of the constraint-based methods cannot be properly controlled, let along the type-II error. 
This is also consistent with 
our empirical results: in Table 1 and Table 2, LGES still performs well with a very small sample size (N=100)
both with and without model mis-specification, while constraint-based methods such as RLCD and GIN does not work well in these scenarios.
}

\section{Limitations}
\label{sec:limitation}
One limitation of this work is that our theoretical results are based on the assumption of linear causal models. 
When data is not linear, we have also conducted experiments to see the performance and it can be shown that  our method still performs well.
Yet,  theoretical analysis and identifiability guarantee for the nonlinear case are to be developed and will be the focus of future work.

\section{Broader Impacts}
\label{sec:impacts}
The goal of this paper is to advance the field of machine learning. We do not see any potential negative societal impacts of the work.

\begin{figure}[t]
  \vspace{-0mm}
\setlength{\belowcaptionskip}{0mm}
    \subfloat[$\graph_1$.]
    {
    \centering
    \includegraphics[width=0.38\textwidth]{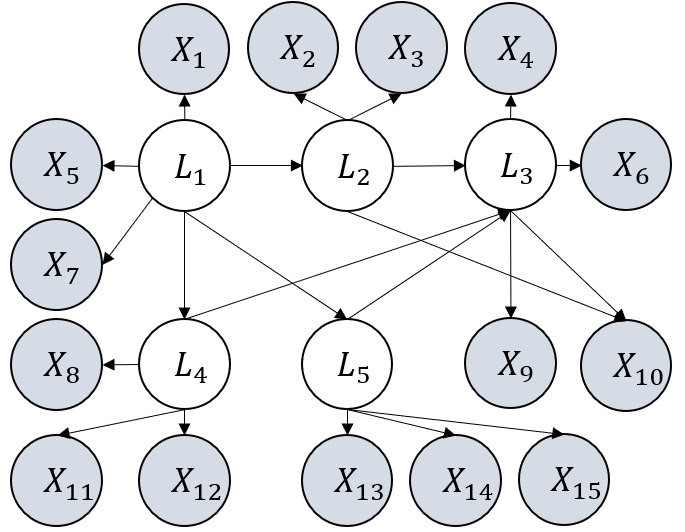}
    }
    \hfill
    \subfloat[$\graph_2$.]
    {
    \centering
    \includegraphics[width=0.38\textwidth]{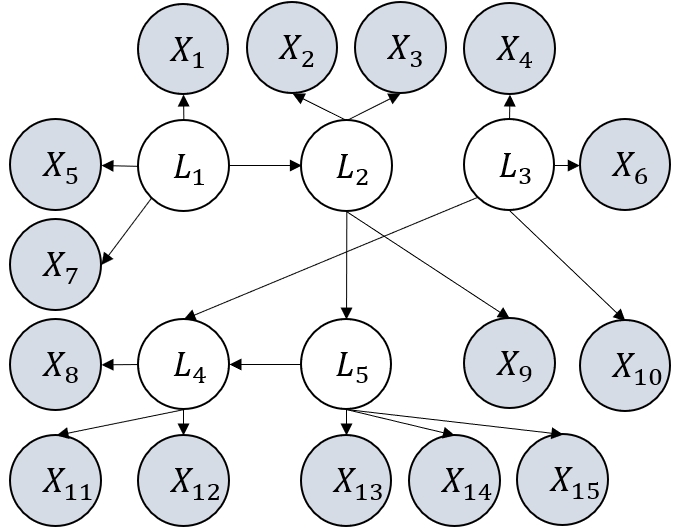}
    }
    \hfill
    \subfloat[$\graph_3$.]
    {
    \centering
    \includegraphics[width=0.38\textwidth]{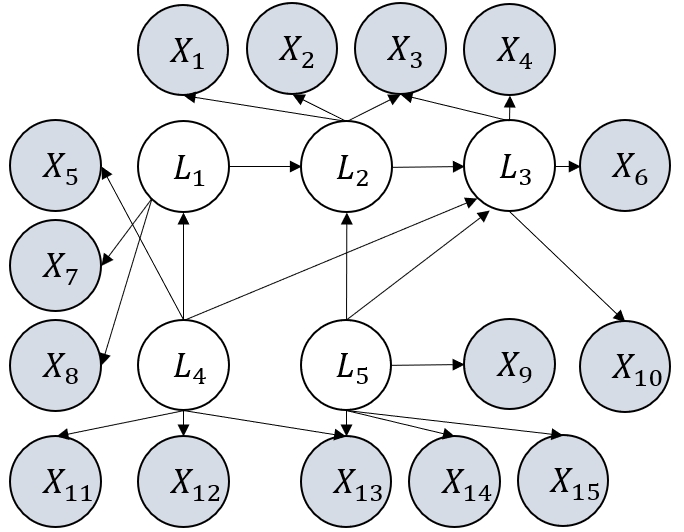}
    }
    \hfill
    \subfloat[$\graph_4$.]
    {
    \centering
    \includegraphics[width=0.38\textwidth]{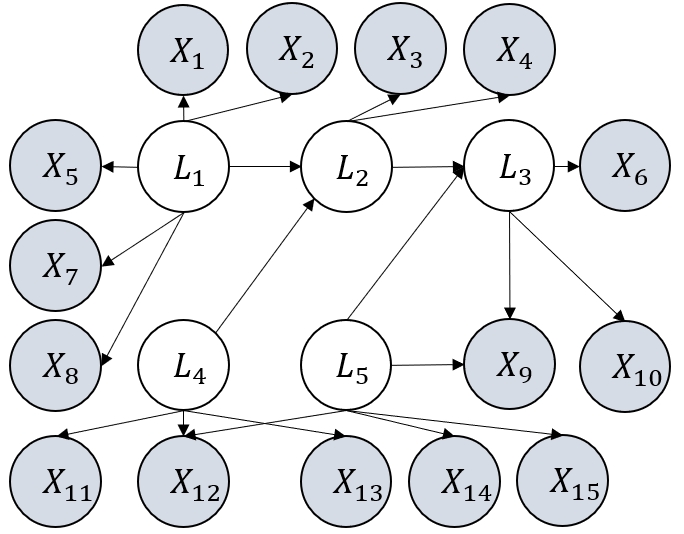}
    }
    \hfill
    \subfloat[$\graph_5$.]
    {
    \centering
    \includegraphics[width=0.38\textwidth]{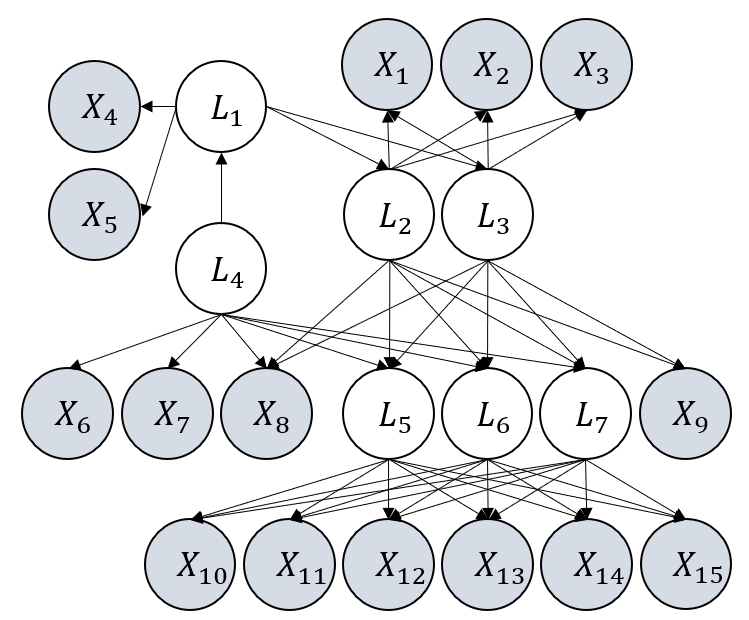}
    }
    \hfill
    \subfloat[$\graph_6$.]
    {
    \centering
    \includegraphics[width=0.38\textwidth]{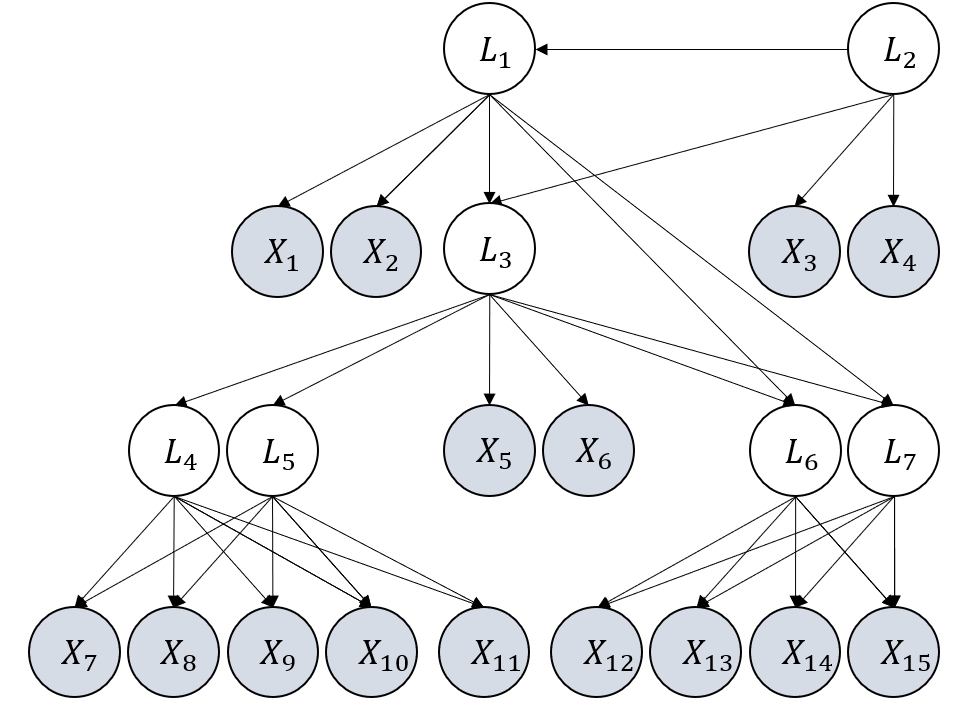}
    }
    \hfill
    \subfloat[$\graph_7$.]
    {
    \centering
    \includegraphics[width=0.35\textwidth]{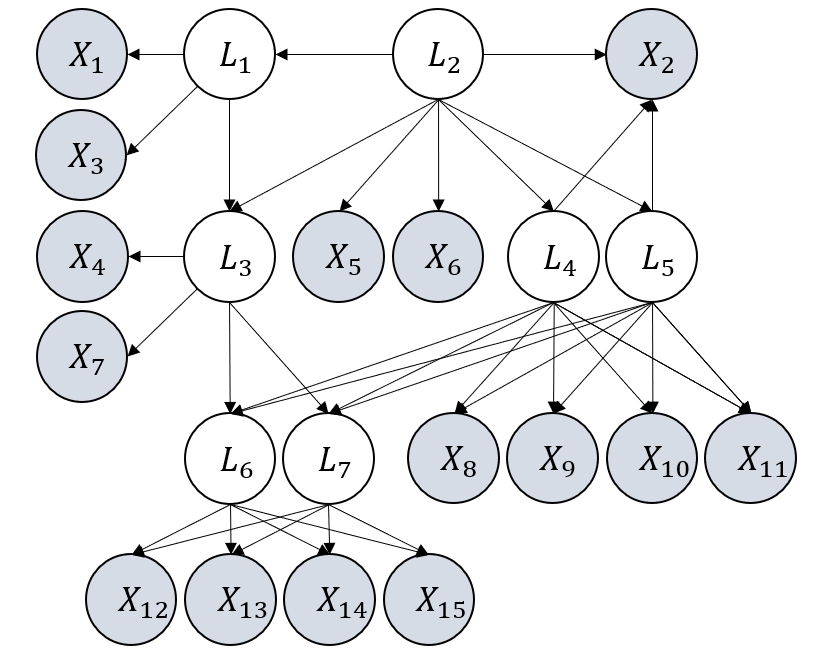}
    }
    \hfill
    \subfloat[$\graph_8$.]
    {
    \centering
    \includegraphics[width=0.38\textwidth]{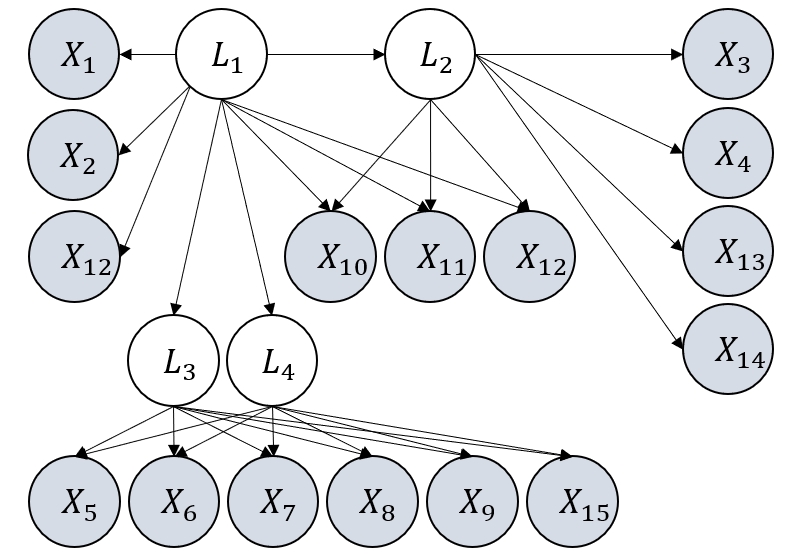}
    }
    \centering
    \vspace{-2mm}
\caption{
Examples of graphs considered in our experiments. They satisfy \cref{definition:gnfm}.
 }
  \label{fig:illustration_of_graphs}
  \vspace{-0mm}
\end{figure}

\end{document}